\newcommand*{\addFileDependency}[1]{
}
\newcommand*{\myexternaldocument}[1]{
}
\DeclareRobustCommand\onedot{\futurelet\@let@token\@onedot}
\def\@onedot{\ifx\@let@token.\else.\null\fi\xspace}
\def\eg{\emph{e.g}\onedot} 
\def\ie{\emph{i.e}\onedot} 
\def\etc{\emph{etc}\onedot} 
\def\wrt{w.r.t\onedot} 
\theoremstyle{plain}
\newtheorem{theorem}{Theorem}[section]
\newtheorem{proposition}[theorem]{Proposition}
\newtheorem{lemma}[theorem]{Lemma}
\newtheorem{corollary}[theorem]{Corollary}
\theoremstyle{definition}
\theoremstyle{remark}
\begin{document}

\twocolumn[
\icmltitle{Deep Extrinsic Manifold Representation for Vision Tasks}



\icmlsetsymbol{equal}{*}

\begin{icmlauthorlist}
\icmlauthor{Tongtong Zhang}{sjtu}
\icmlauthor{Xian Wei}{ecnu}
\icmlauthor{Yuanxiang Li}{sjtu}

\end{icmlauthorlist}

\icmlaffiliation{sjtu}{School of Aeronautics and Astronautics,Shanghai Jiao Tong University, China}
\icmlaffiliation{ecnu}{ecnu}


\icmlkeywords{Machine Learning, ICML}

\vskip 0.3in
]



\printAffiliationsAndNotice{\icmlEqualContribution} 

\begin{abstract}
Non-Euclidean data is frequently encountered across different fields, yet there is limited literature that addresses the fundamental challenge of training neural networks with manifold representations as outputs.
We introduce the trick named Deep Extrinsic Manifold Representation (DEMR) for visual tasks in this context. DEMR incorporates extrinsic manifold embedding into deep neural networks, which helps generate manifold representations.
The DEMR approach does not directly optimize the complex geodesic loss. 
Instead, it focuses on optimizing the computation graph within the embedded Euclidean space, allowing for adaptability to various architectural requirements.
We provide empirical evidence supporting the proposed concept on two types of manifolds, $SE(3)$ and its associated quotient manifolds. 
This evidence offers theoretical assurances regarding feasibility, asymptotic properties, and generalization capability.
The experimental results show that DEMR effectively adapts to point cloud alignment, producing outputs in $ SE(3) $, as well as in illumination subspace learning with outputs on the Grassmann manifold.
\end{abstract}

\section{Introduction}
Data in non-Euclidean geometric spaces has applications across various domains, such as motion estimation in robotics \cite{byravan2017se3nets}, shape analysis in medical imaging \cite{bermudez2018manilearning} \cite{huang2021detecting_mani} \cite{yang2022nestedgrassmann}, \etc.
Deep learning has revolutionized various fields. 
However, deep neural networks (DNN) typically generate feature vectors in Euclidean space, which may not be universally suitable for certain computer vision tasks, such as estimating probability distributions for classification or rigid motion estimation.
The classification of learning problems related to a manifold depends on the application of manifold assumptions.
The first category involves signal processing on a manifold structure, with the resulting output situated within the Euclidean space. 
For example, geometric deep-learning approaches extract features from graphs, meshes, and other structures. The encoded features are then input to decoders for tasks such as classification and regression in the Euclidean space \cite{bronstein2017geometric,cao2020geometric_review,can2021dt_science_geom,bronstein2021GDL}. 
Alternatively, they can also function as latent codes for generative models \cite{ni2021manifoldGAN}.
The second category establishes continuous mappings between data residing on the same manifold to enable regressions. 
For instance, \cite{steinke2010between_mani} addresses regression between manifolds through regularization functional, while \cite{fang2023intrinsic} performs statistical analysis over deep neural network-based mappings between manifolds.
The third category of research focuses on deep learning models that have distinct Euclidean inputs and manifold outputs. This line of research often emphasizes a specific type of manifold, such as deep rotation manifold regression \cite{zhou2019continuity,levinson2020analysis_svd,rpmg}
%


 \begin{figure*} 
	\centering 
	\subfigure[Intrinsic manifold regression via energy minimization \cite{intro2manopt}]{
		\includegraphics[width=0.3\linewidth,scale=1.00]{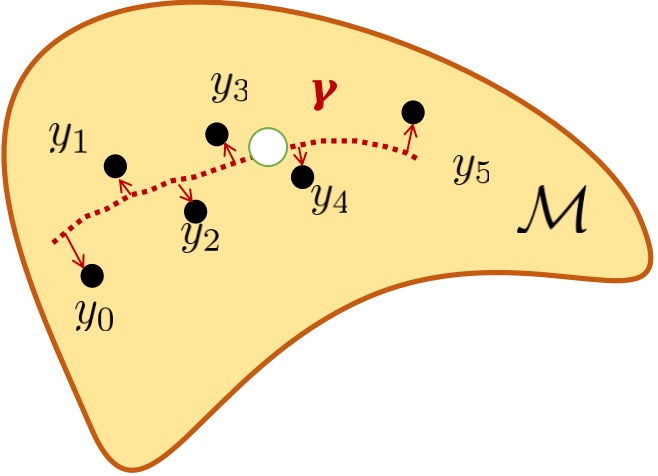} 
		\label{fig:energy_mani}}
  \hspace{0.2mm}
	\subfigure[Intrinsic manifold regression from tangent bundle \cite{zhang2020bayesian_geo}]{%
		\includegraphics[width=0.3\linewidth,scale=1.00]{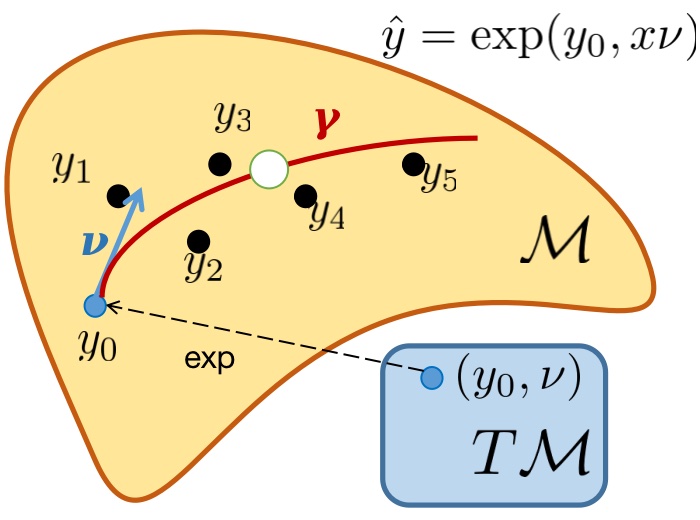} 
		\label{fig:geod_mani}}
   \hspace{0.2mm}
	\subfigure[Extrinsic manifold regression \cite{lin2017extrinsic,lee2021robustextrinsic}]{%
		\includegraphics[width=0.35\linewidth,scale=1.00]{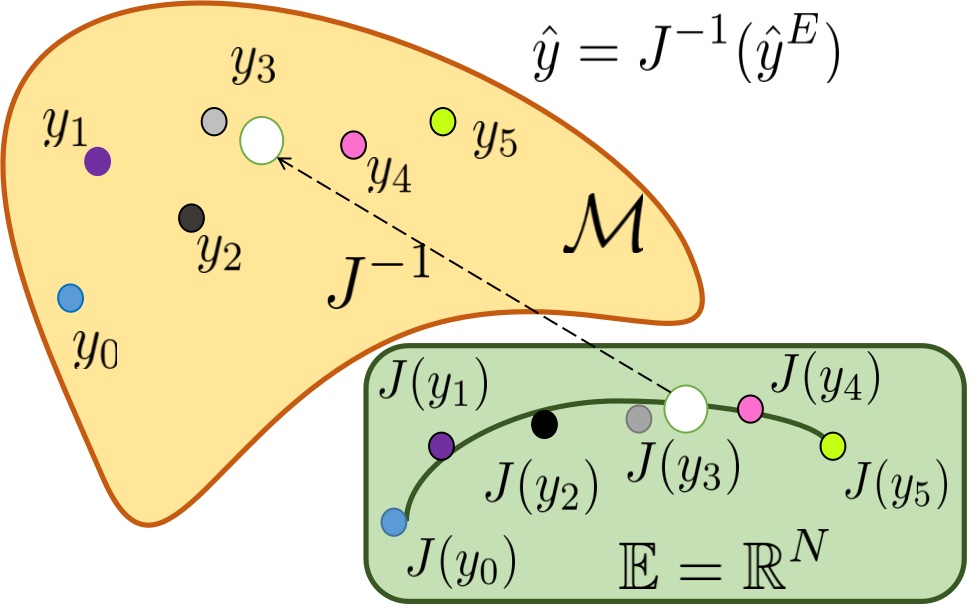} 
		\label{fig:extr_mani}}
  \vspace{-0.5cm}
		\caption{
  Manifold regression explores the relationship between a manifold-valued variable and a value in vector space. A typical intrinsic manifold regression finds the best-fitted geodesic curve $\gamma$ on $\mathcal{M}$ via (a) minimizing a complex energy function of distance and smoothness, or (b) updating parameters in the local tangent bundle $T\mathcal{M}$. 
  Extrinsic manifold regression (c) models the relationship in the extrinsically embedded space.}
   \label{fig:comparison}
   \vspace{-0.6cm}
\end{figure*}

The paper is centered on the third category, which entails creating multiple representations from DNNs. It's worth noting that models that produce outputs on the manifold are typically regularized using geometric metrics, which can be categorized into two types: intrinsic manifold loss and extrinsic manifold loss \cite{bhattacharya2003largesampletheorey,bhattacharya2012extrinsic_machine_vision}, as depicted in Figure \ref{fig:comparison}.
Intrinsic methods aim to identify the geodesic that best fits the data to preserve the geometrical structure \cite{fletcher2011geodesicregress,fletcher2013geodesic,cornea2017geodregression_on_sym,shin2022robust_geo}. 
However, the inherent characteristics of intrinsic distances pose challenges for DNN architectures.
Primarily, many intrinsic losses incorporate intricate geodesic distances, aiming to induce longer gradient flows throughout the entire computation graph \cite{fletcher2011geodesicregress,hinkle2012polynomial,fletcher2013geodesic,shi2009intrinsic,berkels2013discrete_shapespace,fletcher2011geodesicregress}.
Secondly, directly fitting a geodesic by minimizing distance and smoothness energy in the Euclidean space might result in off-manifold points \cite{rpmg,khayatkhoei2018disconnected}.

In contrast, extrinsic regression uses embeddings in a higher-dimensional Euclidean space to create a non-parametric proxy estimator. 
The estimation on the manifold $\mathcal{M}$ can be achieved using $ J^{-1} $, which represents the inverse of the extrinsic embedding $J$.
Extensive investigations in \cite{bhattacharya2012extrinsic_machine_vision,lin2017extrinsic} have established that extrinsic regression offers superior computational benefits compared to intrinsic regression.
Many regression models are customized for specific applications, utilizing exclusive information to simplify model formulations that include explicit explanatory variables. This customization is evident in applications such as shapes on shape space manifolds \cite{berkels2013discrete_shapespace,fletcher2011geodesicregress}.

However, within the computer vision community, deep neural networks are often faced with a large amount of diverse multimedia data. Traditional manifold regression models struggle to handle this varied modeling task due to limitations in representational power. Some recent works have addressed this challenge, such as \cite{fang2023intrinsic} processing manifold inputs with empirical evidence.
This paper presents the idea of embedding manifolds externally at the final regression layer of various neural networks, including ResNet and PointNet. This idea is known as \textit{ Deep Extrinsic Manifold Representation (DEMR)}.
The process is adapted from two perspectives to bridge the gap between traditional extrinsic manifolds and neural networks in computer vision.
Firstly, the conventional choice of a proxy estimator, often represented by kernel functions, is substituted with feature extractors in DNNs. Feature extractors like ResNet or PointNet, renowned for their efficacy in specific tasks, significantly elevate the representational power for feature extraction.

Secondly, to project the neural network output onto the preimage of $J(\cdot)$, we depart from deterministic projection methods employed in traditional extrinsic manifold regression. Instead, we opt for a learnable linear layer commonly found in DNN settings. This learnable projection module aligns seamlessly with most DNN architectures and eliminates the need for the manual design of the projection function $Pr(\cdot)$, a step typically required in prior extrinsic manifold regression models to match the type of the manifold.
These choices not only enhance the model's representational power compared to traditional regression models but also allow for the preservation of existing neural network architectures.

\paragraph{Contribution}
We facilitate the generation of manifold output from standard DNN architectures through extrinsic manifold embedding. In particular, we elucidate the rationale behind pose regression tasks performing more effectively as a specialized instance of DEMR. Additionally, we offer theoretical substantiation regarding the feasibility, asymptotic properties, and generalization ability of DEMR for $SE(3)$ and the Grassmann manifold. Finally, the efficacy of DEMR is validated through its application to two classic computer vision tasks: relative point cloud transformation estimation on $SE(3)$ and illumination subspace estimation on the Grassmann manifold.

\section{DEMR}
\label{sec:DEMR}
\subsection{Problem Formulation}
\paragraph{Estimation in the embedded space}
For distribution $\mathcal{Q}$ on manifold $\mathcal{M}$ of dimension $d$,  the extrinsic embedding  $\Tilde{\mathcal{M}} = J(\mathcal{M})$, from manifold $\mathcal{M}$ to Euclidean space $\mathbb{E}=\mathbb{R}^N$, has distribution $\Tilde{\mathcal{Q}} = \mathcal{Q}\circ J$, which is a closed subset of  $\mathbb{R}^N$, where $d\ll N$. 
In extrinsic manifold regression,  $\forall u\in \mathbb{E}$, $\exists$ a compact projection set  
$Pr(u) = \{x\in \Tilde{\mathcal{M}}:\|x-u\|\leq \|y-u\|,  \forall y\in \Tilde{\mathcal{M}}\}$, mapping $u$ to the closest point on $\mathcal{M}$. 
The extrinsic mean set of $\mathcal{Q}$ is 
$\mu^{ext} = J^{-1}(Pr(\mu))$, where $\mu$ is the mean set of $\Tilde{\mathcal{Q}}$. In DEMR, $\mu$ is acquired by the neural networks, and the estimation on $\mathcal{M}$ is then deterministically computed.

\begin{figure*}
\centering
\includegraphics[width=16cm]{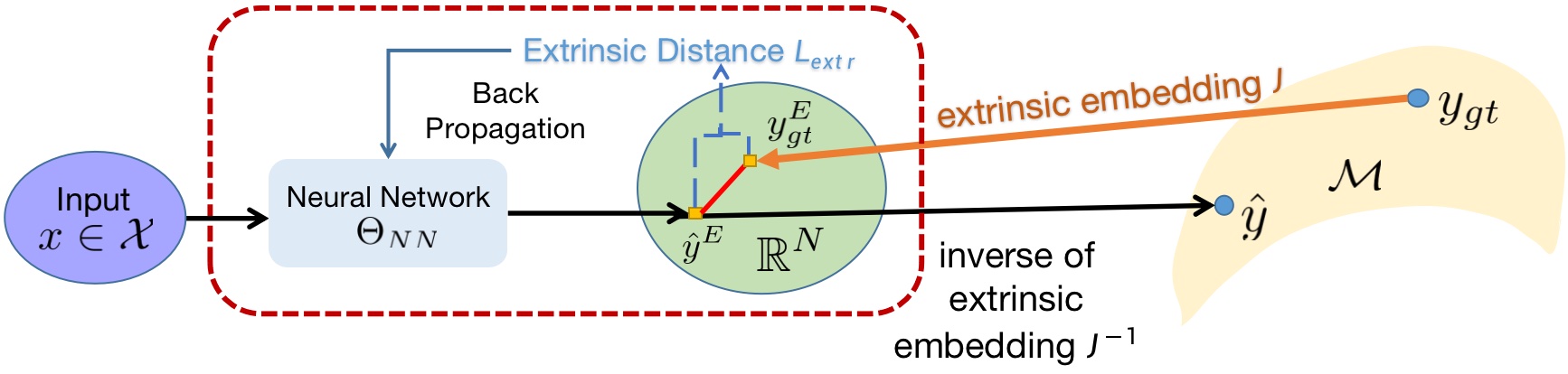}
\caption{DEMR pipeline, with black arrows indicating the forward process, and optimization in the red box.} 
 \label{fig:DEMR}
 \end{figure*}
 
\paragraph{Pipeline design}
\label{sec:pipeline}
The pipeline of DEMR is demonstrated in Figure \ref{fig:DEMR}. For an input $x\in \mathcal{X}$ with corresponding ground truth estimation $y_{gt} \in \mathcal{M}$, the feedforward process contains two steps; firstly the deep estimation $\hat{y}^E$ is given in the embedded space $\mathbb{R}^N$, then the output manifold representation is $\hat{y}=J^{-1}(Pr(\hat{y}^E))$, where $J^{-1}$ is the inverse of extrinsic embedding $J$. 
Given that $\hat{y}^E\in\mathbb{R}^N$, where $\mathbb{R}^N$ covers the real-valued vector space of dimension $N$, $Pr$ is then dropped within the pipeline.
The training loss is computed between $\hat{y}^E$ and the extrinsic embedding $y^E_{gt}=J(y_{gt})$. Therein, the gradient used in backpropagation is computed within $\mathbb{R}^N$, leaving the original DNN architecture unchanged. 
Moreover, this implies that the transformation associated with DEMR can be directly applied to most existing DNN architectures by simply augmenting the dimensionality of the final output layer.

%
Similar to the population regression function in extrinsic regression, the estimator in embedded space $F_{NN}(x)$ as the neural network in extrinsic embedded space $\mathbb{R}^N$ aims to minimize the conditional Fr\'echet mean if it exists:
  $F_{NN}(x) = \arg\min\limits_{m\in\mathcal{M}}\int_{\mathcal{M}}L_{extr}^2(m, y)
  \Tilde{\mathcal{Q}}(\mathrm{d}y|x) 
  = \arg\min\limits_{m\in\mathcal{M}}\int_{\mathcal{M}}
   \|J(m)-J(y)\|^2 \Tilde{\mathcal{Q}}(\mathrm{d}y|x)$ 
where $L_{extr}$ is the extrinsic distance,  $\Tilde{\mathcal{Q}}(\mathrm{d}y|x)$ denotes the conditional distribution of $Y$ given $X=x$, 
and $\mathcal{Q}(\cdot|x) = \Tilde{\mathcal{Q}}(\cdot|x) \circ J^{-1}$
is the conditional probability measure on $\mathcal{M}$. Therefore, in both the training and evaluation phases of DEMR, the computation burden is partaken by the deterministic conversion $J^{-1}$, which requires no gradient computation. 
 %
 %
%
\paragraph{Reformulation of neural network for images}
The input data sample $\mathcal{I}$ is fed sequentially to the differentiable feature extractor $T(\cdot)$, where the feature extractor can be composited by various modules, such as Resnet, Pointnet, \etc,  according to the input: 
\begin{align}
  \label{eq:combination}
   \hat{y}^E=F_{NN}(\mathcal{I}) &= 
   b + P\cdot (T(\mathcal{I})) \\ \notag
   &= b + P\cdot (c_1^{\mathcal{I}},\ldots,c_n^{\mathcal{I}})^{\top}
   \\ \notag
   &\triangleq b + P\cdot C_{fm} \triangleq F_L(C_{fm})
\end{align}
where $\cdot$ indicates matrix multiplication, $b$ indicates the bias, $C_{fm}$ belongs to the vector matrix of feature maps from $T(\cdot)$, with decomposition into column vectors ${c_k^{\mathcal{I}}}$. Therefore, the DNN in DEMR serves as the composition mapping from raw input $\mathcal{I}$ to the preimage of $J^{-1}$. 
\paragraph{Projection $Pr$ onto the preimage of $J$}
For an extrinsic embedding function $J(\cdot)$, there still exists a pivotal issue that the estimation given above might not lie in the preimage $\mathcal{IM}$ of $J(\cdot)$.

Since $\hat{y}_i \in \mathbb{R}^N$, $T(\mathcal{I})$ and $\mathcal{IM}$ are all Euclidean and the projection between them can be presented as a linear transform $Pr(\cdot):$, in matrix forms. 
Other than a deterministic linear projection in extrinsic manifold regression \cite{lin2017extrinsic,lee2021robustextrinsic}, DEMR adopts a learnable projection fulfilled by linear layers within a deep framework as $F_L(\cdot)$ in Equation \ref{eq:combination}, then the output of a DNN is $Pr(F_{NN}{\mathcal{I}})\in \mathbb{R}^N$, and $J^{-1}(Pr(C_{fm})) \in \mathcal{M}$. 
Therefore, the final output of DEMR on the manifold will be
\begin{equation}
\begin{aligned}
  \hat{y} &=J^{-1}(\hat{y}^E)= J^{-1}(Pr(C_{fm}))= J^{-1}(F_L(C_{fm}))\\ \notag
  &= J^{-1}(\arg\min\limits_{q\in\Tilde{M}}\|q-\hat{F}_{NN}(\mathcal{I})\|^2)  \label{eq:Fext}     
\end{aligned}
\end{equation}

\paragraph{DIMR}
In contrast, the architecture adopted in
 \cite{lohit2017dl_mani} uses geodesic loss to train the neural network. The intrinsic geodesic distance is $d_{intr}(y_{gt}, \hat{y}) = Log_{y_{gt}}\hat{y}$ in Figure \ref{fig:DIMR}, where $Log$ is the logarithmic map of $\mathcal{M}$.
 We call it \textit{ Deep Intrinsic Manifold Representation (DIMR)} for convenience, whose model parameter set $\Theta_{NN}$ is updated with the gradients of $L_{intr}$. 


\begin{figure}
\centering
\includegraphics[width=8cm]{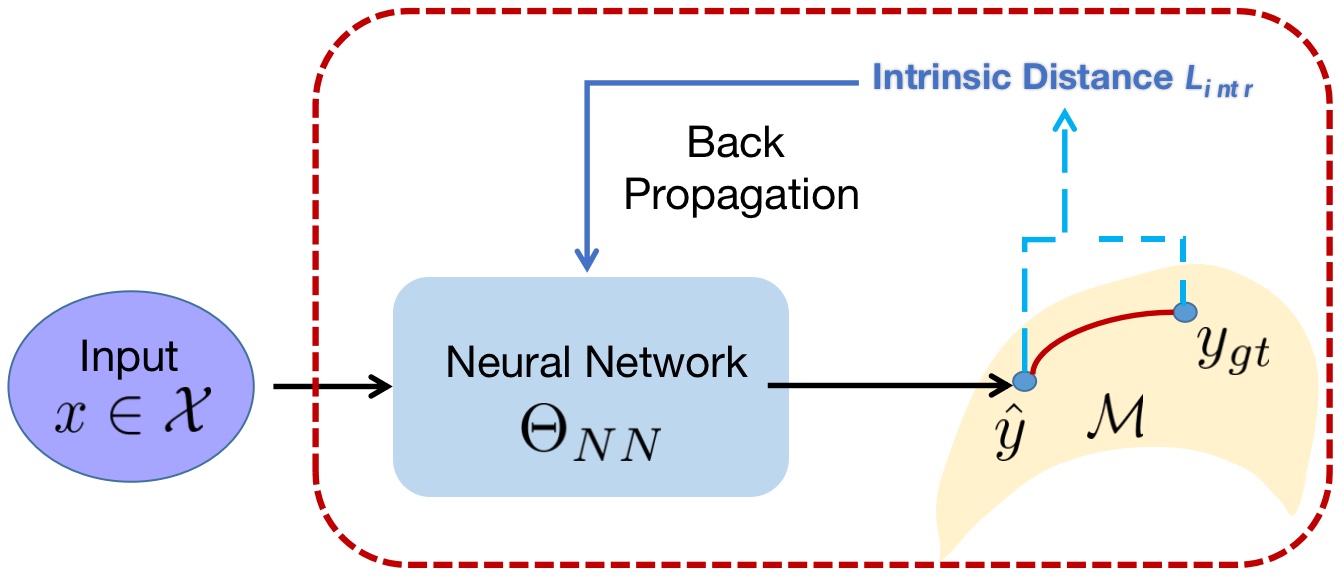}
\caption{DIMR pipeline with geodesic loss on $\mathcal{M}$, with the black arrow indicating the forward process.} 
 \label{fig:DIMR}
\end{figure}

%
\subsection{The extrinsic embedding $J$}
\label{sec:rules_embed}
The embedding $J$ is designed to preserve geometric properties to a great extent, 
which can be specified as equivariance. $J$ is considered an equivariant embedding if there is a group homomorphism $\phi: G\rightarrow GL(D, \mathbb{R})$ from $G$ to the general linear group $GL(D, \mathbb{R})$ of degree $D$ such that $\forall g\in G, q\in\mathcal{M}$. There is $J(gq)=\phi(g)J(q)$. Choosing $J$ is not unique, and the choices below are equivariant embeddings. 
Given orthogonal Group Embedding $J_O$,
for $R_1, R_2 \in O(n)$ with singular value decomposition $J_O(R_1) = U_1\Sigma_1V_1$ and $J_O(R_2) = U_2\Sigma_2V_2$ and let $J_O(R_1R_2)\triangleq K = U_1V_1^{\top}\Sigma_KU_2V_2^{\top}$, 
then there is $R_1R_2 = U_1V_1^{\top}U_2V_2^{\top}$ and 
if $\phi(R_1) = R_1\Sigma_K U_2\Sigma_2^{-1}U_2^{\top}$, there is $J_O(R_1R_2) = \phi(R_1)U_2\sigma_2V_2^{\top}=\phi(R_1)J_O(R_2)$.

The proof of Grassmannian Embedding $J_{\mathcal{G}}$ follows the same idea with proof for $O(n)$, since they are all based on matrix decomposition.
 
\subsubsection{Matrix Lie Group}
\label{sec:3.3.1}
\paragraph{9D: SVD of rank 9}
\label{sec:liegroup}
An intuitive embedding choice for the matrix Lie group is to parameterize each matrix entry. For the special orthogonal group $O(n)$ of dimension $n$, the natural embedding can be given by $J_O:O(n)\rightarrow \mathbb{R}^{n^2}$ and $J_{SO}:SO(n)\rightarrow \mathbb{R}^{n^2}$. For its inverse, $J_O^{-1}$ which aims to produce $n$ orthogonal vectors, 
where Gram Schmidt orthogonalization \cite{zhou2019continuity} and its variations are common ways to reparameterize orthogonal vectors. Singular Value Decomposition (SVD) is another convenient way to produce an orthogonal matrix \cite{levinson2020analysis_svd}:

\begin{equation}
    \mu^{ext} = \hat{F}_{NN}(x) = 
    \begin{cases}
    UV^{\top}, &det(\mu)>0\\
    UHV^{\top}, &otherwise
    \end{cases}\label{eq:svd}
\end{equation}
where $U$ and $V$ come from singular value decomposition $\mu = UDV^{\top}$ with elements arranged according to the descending order of singular value, and $H$ is a diagonal matrix $I_n$ with the last entry replaced by $-1$.
Specifically for $SO(3)$, $J^{-1}_{SO(3)}$ could also be produced by cross product.

As for the special Euclidean group, the group of isometries of the Euclidean space, \ie,  the rigid body transformations preserving Euclidean distance. 
A rigid body transformation can be given by a pair $(A, t)$ of affine transformation matrix $A$ and a translation $t$,  or it can be written in a matrix form of size $n+1$. 
For instance, a special Euclidean group can be regarded as the semidirect product of a rotation group $SO(n)$ and a translation group $T(n)$, $SE(n)= SO(n)\ltimes T(n)$. 
\paragraph{6D: cross product}
Specifically for orthogonal matrices on $SO(3)$, the invertible embedding $J$ can be more convenient via cross-product operation $\times$. 
For a 6-dimensional Euclidean vector $x = [x_a, x_b]$ as network output, where $x$ is the concatenation of $x_a$ and $x_b$, let $x_c = x_a\times x_b$, then $J^{-1}(x) = [x_a^T, x_b^T, x_c^T]$. 
\subsubsection{The Quotient Manifold of Lie Group}
\label{sec:grassmann}
The real Grassmann manifold $\mathcal{G}(m, \mathbb{R}^n)$ parameterize all $m$-dimensional linear subspaces of $\mathbb{R}^n$, which can also be defined by a quotient manifold:
$\mathcal{G}(m, \mathbb{R}^n) = O(n)/(O(m)\times O(n-m)) = SO(n)/SO(m)\times SO(n-m) = \mathcal{V}(m,n)/SO(m)$.
Since it is a quotient space, and we care more about its rank than which basis to provide, we could convert the problem of embedding $Y\in \mathcal{G}(m, \mathbb{R}^n)$ to finding mappings for $YY^{\top} \in SPD^{++}_m$, whose basis could be given by diagonal decomposition,
 which we referred to as $\mathtt{DD}$. 
 Actually, $\mathtt{DD}$ is a special case of $\mathtt{SVD}$ for a symmetric matrix.
 Thus, for a distribution $\mathcal{Q}$ defined on $\mathcal{G}(m, \mathbb{R}^n)$, given $\mu = \hat{F}(x)\in \mathbb{R}^{n\times m}$ and its diagonal decomposition (denoted by $\mathtt{DD}$) $\mu = U\Sigma U^{\top}$, the inverse embedding $J^{-1}_{\mathcal{G}}$ comes from $\mu^{ext} = J^{-1}_{\mathcal{G}}(\mu) = USU^{\top}$, and the first $m$ column vectors of $U$ constitute the subspace. 

\subsection{DEMR as a generalization of previous research}
\label{sec:dnn_struct}
%
%
\paragraph{DNN with Euclidean output}
When the output space is a vector space,  $J(\cdot)$ becomes an identity mapping. Thus, DNN with Euclidean output can be treated as a degenerate form of DEMR.
The output from $F_NN$ is linearly transformed from the subspace spanned by the base $C_{fm}$, causing its failure for new extracted features $c'\notin C_{fm}$.
It is the corresponding failure case in \cite{zhou2019continuity}, when the dimension of the last layer equals the output dimension. 
%
\paragraph{Absolute/relative pose regression}
%
From Equations \ref{eq:combination}, the estimation $F_{NN}(\mathcal{I})$ is linearly-transformed from the subspace spanned by ${c_k^{\mathcal{I}}}$.
Hence, in typical DNN-based pose regression tasks, the predicted pose $\hat{y}$ can be seen as a linear combination of feature maps extracted from input poses, resulting in the failure of extrapolation for unseen poses in the test set. 
It is a common problem because training samples are often given from limited poses. 
\cite{zhou2019continuity} suggests that it is 
the discontinuity in the output space incurs poor generalization. 
Indeed, a better assumption for APR is that the pose estimation lies on $SE(3)$, which is composed of a rotation and a translation. The manifold assumption renders the estimator more powerful in continuous interpolation and extrapolation from input poses. This is because the continuity and symmetry to $SE(3)$ help a lot in the deep learning task. The detailed analysis is in Section \ref{sec:3.3.1} and validation in Section \ref{sec:exp1}
Thereon APR on $SE(3)$ can be regarded as a particular case solved by DEMR; and \cite{sattler2019understanding,zhou2019continuity} revealed parts of the idea from DEMR. 

\section{Analysis}
\label{sec_analysis}
In line with the experimental setup, the analysis is performed on the special orthogonal group, its quotient space, and the special Euclidean group \footnote{All the proofs are included in the Appendix}.
\subsection{Feasibility of DEMR}
Before conducting optimization in extrinsic embedding space $\mathbb{R}^N$, the primary misgiving consists of whether the geometry in extrinsic embedded space properly reflects the intrinsic geometry of $\mathcal{M}$.  

Apparently, extrinsic embedding $J(\cdot)$ is a diffeomorphism preserving geometrical continuity, which is advantageous for extrinsic embeddings. Since we want to observe conformance between $L_{extr}$ and $L_{intr}$, it's natural to bridge distances with smoothness.  Firstly, we need the diffeomorphism between manifolds to be bilipshitz.
\begin{lemma}
\label{lem:lipshitz}
Suppose that $\mathcal{M}_1, \mathcal{M}_2$ are smooth and compact Riemannian manifolds,$f:\mathcal{M}_1 \rightarrow \mathcal{M}_2$ is a diffeomorphism. Then $f$ is bilipschitz \wrt the Riemannian distance.
\end{lemma}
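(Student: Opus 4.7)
The plan is to control the Riemannian distances on both sides by comparing them pointwise through the differential of $f$, and then to globalize using length-minimizing geodesics. The target is to produce constants $c, C > 0$ such that $c\, d_1(x,y) \le d_2(f(x), f(y)) \le C\, d_1(x,y)$ for all $x, y \in \mathcal{M}_1$.

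First I would introduce the pointwise operator norm $\|df_p\|$ of the differential $df_p: T_p\mathcal{M}_1 \to T_{f(p)}\mathcal{M}_2$, measured with respect to the Riemannian metrics on tangent spaces. Since $f$ is smooth, local coordinate expressions for $df_p$ depend smoothly on $p$, and the operator norm of a linear map depends continuously on its matrix entries (together with the smoothly varying metric tensors). Thus $p \mapsto \|df_p\|$ is a continuous function on $\mathcal{M}_1$. Compactness then yields a finite maximum $C := \max_{p \in \mathcal{M}_1} \|df_p\| < \infty$. Because $f$ is a diffeomorphism, $df_p$ is a linear isomorphism at each point, so $\|df_p^{-1}\|$ is likewise continuous and positive on the compact manifold $\mathcal{M}_2$, giving a finite bound $\max_{q \in \mathcal{M}_2} \|df_{f^{-1}(q)}^{-1}\| = 1/c$ for some $c > 0$.

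Next I would globalize. By Hopf--Rinow applied to the compact (hence complete) Riemannian manifold $\mathcal{M}_1$, for any $x, y \in \mathcal{M}_1$ there exists a length-minimizing geodesic $\gamma:[0,1]\to \mathcal{M}_1$ joining them with $L(\gamma) = d_1(x,y)$. Pushing forward via $f$, the curve $f \circ \gamma$ joins $f(x)$ to $f(y)$, and its length satisfies $L(f\circ\gamma) = \int_0^1 \|df_{\gamma(t)}(\dot\gamma(t))\|\, dt \le C \int_0^1 \|\dot\gamma(t)\|\, dt = C\, d_1(x,y)$. Since $d_2(f(x),f(y))$ is bounded above by the length of any connecting curve, this gives the upper Lipschitz constant $C$. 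The lower bound follows symmetrically by running the same argument with $f^{-1}: \mathcal{M}_2 \to \mathcal{M}_1$ on the minimizing geodesic in $\mathcal{M}_2$ between $f(x)$ and $f(y)$, yielding $d_1(x,y) \le (1/c)\, d_2(f(x), f(y))$.

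The main obstacle I anticipate is not a deep one but a careful one: ensuring that the operator norm $\|df_p\|$ is genuinely continuous in $p$ when the metrics on $T_p\mathcal{M}_1$ and $T_{f(p)}\mathcal{M}_2$ themselves vary with $p$. I would address this by trivializing both tangent bundles over a finite open cover of $\mathcal{M}_1$ (using compactness), writing the metrics as smooth positive definite matrix fields in each chart, and noting that the operator norm of a smoothly parameterized family of linear maps between smoothly parameterized inner product spaces is continuous. A second minor point is that one must actually use minimizing geodesics rather than an infimum over curves, which is precisely where Hopf--Rinow and the completeness guaranteed by compactness are essential; without completeness the lower-bound direction would require an additional limiting argument.
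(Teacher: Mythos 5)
Your proposal is correct and follows essentially the same route as the paper's proof: bound the operator norm of the differential by continuity plus compactness (the paper phrases this via the unit tangent bundle), then integrate along curves to transfer the bound to distances, and handle the reverse inequality through $f^{-1}$. Your version is in fact slightly more careful on the lower bound, since applying the same argument to $f^{-1}$ is the right way to get the bilipschitz constant (the paper's claim that $f^{-1}$ is ``$\tfrac{1}{C}$-Lipschitz'' is a loose statement of exactly this step), and your use of Hopf--Rinow, while valid, is not strictly needed because the upper bound already follows by taking the infimum over all connecting curves.
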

Then we show in \ref{prop:conformance} the conformity of extrinsic distance and intrinsic distance, which enables indirectly representing an intrinsic loss in Euclidean spaces.
\begin{proposition}
\label{prop:conformance}
For a smooth embedding $J:\mathcal{M}\rightarrow\mathbb{R}^m$, where the n-manifold $\mathcal{M}$ is compact, and its metric is denoted by $\rho$, the metric of $\mathbb{R}^m$ is denoted by $d$. For any two sequences $\{x_k\}, \{y_k\}$ of points in $\mathcal{M}$ and their images $\{J(x_k)\}, \{J(y_k)\}$, if $\lim\limits_{n\rightarrow\infty}d(J(x_n), J(y_n))=0$ then $\lim\limits_{n\rightarrow\infty}\rho(x_n, y_n)=0$.
\end{proposition}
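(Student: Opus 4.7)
The plan is to proceed by contradiction, exploiting sequential compactness of $\mathcal{M}$ rather than invoking the full bilipschitz strength of Lemma \ref{lem:lipshitz}. The only properties of $J$ required are continuity and injectivity, both of which follow from the definition of a smooth embedding, together with continuity of the Riemannian distance $\rho$ in its two arguments.

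First I would assume the conclusion fails, so that there exists some $\epsilon > 0$ and, after passing to a subsequence, sequences with $\rho(x_n, y_n) \geq \epsilon$ for all $n$. Using compactness of $\mathcal{M}$, I would extract further subsequences along which $x_n \to x^*$ and $y_n \to y^*$ in $\mathcal{M}$; continuity of $\rho$ then forces $\rho(x^*, y^*) \geq \epsilon$, so in particular $x^* \neq y^*$. Next I would apply continuity of $J$ to obtain $J(x_n) \to J(x^*)$ and $J(y_n) \to J(y^*)$ in $\mathbb{R}^m$, combine this with the hypothesis $d(J(x_n), J(y_n)) \to 0$ via the triangle inequality, and conclude $d(J(x^*), J(y^*)) = 0$, i.e., $J(x^*) = J(y^*)$. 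Injectivity of the embedding then yields $x^* = y^*$, contradicting the separation established above.

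The main (mild) obstacle is recognizing that no quantitative estimate is needed here: the proposition is really a restatement of the fact that $J^{-1}$ is continuous on the compact image $J(\mathcal{M})$, and therefore uniformly continuous with respect to the ambient Euclidean metric $d$. One could equivalently package the argument as this uniform-continuity observation, which gives the same conclusion without subsequential extraction and also suggests a quantitative refinement (a global modulus of continuity, or even a Lipschitz constant) obtained by invoking Lemma \ref{lem:lipshitz}.
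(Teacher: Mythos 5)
Your proof is correct, and it takes a genuinely different route from the paper's. The paper argues quantitatively: it uses the positive normal injectivity radius of the compact image $J(\mathcal{M})$, the normal exponential map, and a Lipschitz retraction, together with Lemma~\ref{lem:lipshitz}, to assert that $J$ is (locally) bilipschitz and then reads off $\rho(x_n,y_n)\leq C_{\max}\,d(J(x_n),J(y_n))$. You instead give a soft argument: by contradiction, sequential compactness of $\mathcal{M}$, continuity of $\rho$ and of $J$, and injectivity of the embedding force the limit points to coincide while remaining $\epsilon$-separated. Your version needs far less structure --- only that $J$ is a continuous injection of a compact metric space, equivalently that $J^{-1}$ is uniformly continuous on the compact image $J(\mathcal{M})$ --- and it cleanly handles the one case a purely local bilipschitz estimate does not automatically cover, namely pairs $x_n, y_n$ that stay far apart in $\mathcal{M}$ while their images approach each other; in the paper's argument that case is only excluded implicitly via the boundedness of the ratio $\phi(y_1,y_2)=d_{Y_s}(y_1,y_2)/d_Y(y_1,y_2)$ on compact sets away from the diagonal. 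What the paper's heavier machinery buys, and your qualitative argument does not, is an explicit linear modulus (a constant $C_{\max}$ with $\rho\leq C_{\max}\,d\circ J$), which is the stronger ``conformance'' statement one would ultimately want to justify substituting the extrinsic loss for the intrinsic one; as you note, this refinement can be recovered by invoking Lemma~\ref{lem:lipshitz} on top of your argument.
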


\subsection{Asymptotic MLE}
In this part, we demonstrate that DEMR for $SO(3)$ is the approximate maximum likelihood estimation (MLE) of the $SO(3)$ response, and for Grassmann manifold $\mathcal{G}(m, \mathbb{R}^n)$ DEMR is the MLE of the Grassmann response.

To be noticed, we adopt a new error model in conformity with DEMR.
In previous work such as \cite{levinson2020analysis_svd}, 
the error noise matrix $N$ is assumed to be filled with random entries $n_{ij}\sim N(0,\sigma)$, which is not rational,  because $N$ also lies on $\mathcal{M}$ and there are innate structures between the entries of $N$.
Here we assume $N\in\mathcal{M}$, so the probability $Q$ on $\mathcal{M}$ should be established first.
\subsubsection{Lie group for transformations}As \cite{bourmaud2015EKF} suggested,
we consider the connected, unimodular matrix Lie group, including the most frequently used categories in computer vision: $SE(3)$, $SO(3)$, $SL(3)$, \etc.  Since $SO(3)$ is a degenerate case of $SE(3)$ without translation, here we consider the concentrated Gaussian distribution on $SE(3)$. The probability density function (pdf) takes the form $
    P(x;\Sigma) = \alpha \exp{
    -\frac{1}{2}(
    [\log_{\mathcal{M}}(x)]_{\mathcal{M}}^{\vee\top}\Sigma^{-1}[\log_{\mathcal{M}}(x)]_{\mathcal{M}}^{\vee}
    )
    }
$, 
 where $\alpha$ is the normalizing factor, $x\in SO(3)$ and the covariance matrix $\Sigma$ is positive definite. 
 
 Maps $[\cdot]^{\wedge}$and $[\cdot]^{\vee}$ are linear isomorphism, re-arranging the Euclidean representations into anti-symmetric matrix and back \footnote{
    $hat [\cdot]^{\wedge}: \mathbb{R}^3 \rightarrow so(3); \mathbf{\theta}\rightarrow \mathbf{\theta} = [\mathbf{\theta}]_{\times}, 
    vee[\cdot]^{\vee}:so(3)\rightarrow \mathbb{R}^3; [\mathbf{\theta}]_{\times}^{\wedge}=\mathbf{\theta}
$ where $[\cdot]_{times}$ indicates the antisymmetric matrix form of the vector.
}. 
For a vector $\theta = [\theta_1, \theta_2, \theta_3]^{\top}$,  there is $\mathbf{\theta}^{\wedge} = [\mathbf{\theta}]_{\times}$ and $[\mathbf{\theta}]_{\times}^{\vee}=\mathbf{\theta}$.

\begin{proposition}
\label{prop:so3_MLE}
$J^{-1}_{SO}:\mathbb{R}^9\rightarrow SO(3)$  gives an approximation of MLE of rotations on $SO(3)$, if assuming $\Sigma = \sigma I$,  $I$ is the identity matrix and $\sigma$ an arbitrary real value.
\end{proposition}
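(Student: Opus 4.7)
\textbf{Proof plan for Proposition~\ref{prop:so3_MLE}.}
The plan is to reduce the MLE problem for the concentrated Gaussian on $SO(3)$ with isotropic covariance to an orthogonal Procrustes problem on $\mathbb{R}^9$, and then to recognise $J^{-1}_{SO}$ defined in~\eqref{eq:svd} as the exact solver of that Procrustes problem.

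First, I would plug $\Sigma = \sigma I$ into the given pdf. For i.i.d.\ observations $x_1,\dots,x_n$ distributed as a concentrated Gaussian centred at the unknown rotation $R\in SO(3)$, write each sample as $x_i = R\,\varepsilon_i$ with $\varepsilon_i$ a concentrated noise term. The log-likelihood reduces to
\begin{equation*}
\ell(R) \;=\; n\log\alpha \;-\; \frac{1}{2\sigma}\sum_{i=1}^{n}\bigl\lVert [\log_{\mathcal{M}}(R^{-1}x_i)]^{\vee}\bigr\rVert_2^{2}.
\end{equation*}
Because $\bigl\lVert [\log_{\mathcal{M}}(R^{-1}x_i)]^{\vee}\bigr\rVert_2$ is precisely the intrinsic (angle) distance $\rho_{SO(3)}(R,x_i)$, maximising the likelihood is equivalent to minimising $\sum_i \rho_{SO(3)}(R,x_i)^2$.

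Next I would invoke the well-known local identity relating the intrinsic distance on $SO(3)$ to its chordal (Frobenius) counterpart: for any $R_1,R_2\in SO(3)$ with intrinsic angle $\theta = \rho_{SO(3)}(R_1,R_2)$,
\begin{equation*}
\lVert R_1-R_2\rVert_F^{2} \;=\; 8\sin^{2}(\theta/2) \;=\; 2\theta^{2} + O(\theta^{4}).
\end{equation*}
Under the concentrated Gaussian assumption ($\sigma$ small), $\theta$ is $O(\sqrt{\sigma})$ almost surely at samples near $R$, so
\begin{equation*}
\sum_{i=1}^n \rho_{SO(3)}(R,x_i)^2 \;=\; \tfrac12\sum_{i=1}^n \lVert R-x_i\rVert_F^{2} \;+\; O(\sigma^{2}).
\end{equation*}
Hence the MLE coincides, up to the stated approximation, with the solution of the extrinsic (Procrustes) problem
\begin{equation*}
\widehat{R}_{\mathrm{ext}} \;=\; \arg\min_{R\in SO(3)}\sum_{i=1}^n \lVert R - J(x_i)\rVert_F^{2}
\;=\; \arg\min_{R\in SO(3)} \lVert R - \mu\rVert_F^{2},
\end{equation*}
where $\mu = \tfrac{1}{n}\sum_i J(x_i)\in\mathbb{R}^{9}$ is exactly the network's estimate $\hat{F}_{NN}(x)$ before the inverse embedding.

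Finally, I would close the loop by invoking the classical Procrustes/Kabsch lemma: writing the SVD $\mu = UDV^{\top}$, the minimiser over $SO(3)$ is $UV^{\top}$ when $\det(\mu)>0$ and $UHV^{\top}$ with $H=\operatorname{diag}(1,1,-1)$ otherwise — precisely the definition of $J^{-1}_{SO}$ in~\eqref{eq:svd}. Combined with the previous paragraph this gives $J^{-1}_{SO}(\mu) = \widehat{R}_{\mathrm{ext}} = \widehat{R}_{\mathrm{MLE}} + O(\sigma)$, proving the proposition.

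\emph{Expected main obstacle.} The routine parts are the Procrustes identification and the Taylor expansion of $8\sin^2(\theta/2)$; the delicate step is controlling the approximation error uniformly, i.e.\ arguing that the $O(\theta^4)$ remainder does not shift the argmin away from $UV^{\top}$ by more than $O(\sigma)$. I would handle this by restricting the optimisation to a geodesic ball of radius $O(\sqrt{\sigma}\log n)$ around $R$ (which contains every $x_i$ with overwhelming probability under the concentrated Gaussian) and using Lemma~\ref{lem:lipshitz} to transfer the chordal minimiser back into this ball, ensuring the sign ambiguity in~\eqref{eq:svd} is resolved consistently in the high-probability regime.
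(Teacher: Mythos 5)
Your proposal follows essentially the same route as the paper's proof: assume the concentrated Gaussian so that the intrinsic log-likelihood term collapses to a squared Euclidean/Frobenius discrepancy, then observe that the resulting chordal minimisation over $SO(3)$ is exactly the Procrustes problem solved by the SVD formula defining $J^{-1}_{SO}$. Your write-up is in fact more explicit than the paper's (which asserts the approximation $\arg\max L \approx \arg\min\lVert F_{\Theta_{NN}}-Y\rVert_F^2$ without the $8\sin^2(\theta/2)=2\theta^2+O(\theta^4)$ identity or the error-control discussion), but the underlying argument is the same.
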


DEMR on special Euclidean Group $SE(3)$ also approximately provides maximum likelihood estimation, sharing similar ideas of proof with $SO(3)$.   
\begin{proposition}
\label{prop:se3_MLE}
$J^{-1}_{SE}:\mathbb{R}^9\rightarrow SE(3)$ , where the rotation part comes from $\mathtt{SVD}$ ((see Supplementary Material) gives an approximation of MLE of transformations on $SE(3)$, if assuming $\Sigma = \sigma I$,  $I$ is the identity matrix and $\sigma$ an arbitrary real value.
\end{proposition}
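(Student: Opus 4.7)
The plan is to mirror the proof of Proposition~\ref{prop:so3_MLE} and exploit the semidirect product decomposition $SE(3)=SO(3)\ltimes T(3)$. First I would write down the negative log-likelihood under the concentrated Gaussian with $\Sigma=\sigma I$. For an observation $g\in SE(3)$ against ground truth $g_0$, the error $e=g_0^{-1}g$ has log-map coordinates $[\log_{SE(3)}(e)]^{\vee}=(\omega, V(\omega)^{-1}\tau)\in\mathbb{R}^{6}$, where $\omega$ is the axis--angle vector of the rotational component $R_0^{\top}R$, $\tau=R_0^{\top}(t-t_0)$ is the body-frame translation, and $V(\omega)=I+\tfrac{1-\cos\|\omega\|}{\|\omega\|^{2}}[\omega]_{\times}+\tfrac{\|\omega\|-\sin\|\omega\|}{\|\omega\|^{3}}[\omega]_{\times}^{2}$ is the standard left Jacobian on $SE(3)$. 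Under $\Sigma=\sigma I$ the negative log-likelihood is proportional to $\|\omega\|^{2}+\|V(\omega)^{-1}\tau\|^{2}$.

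Next I would use the concentration assumption underpinning the Bourmaud--style pdf to decouple the two terms. Since $V(\omega)=I+O(\|\omega\|)$, we have $\|V(\omega)^{-1}\tau\|^{2}=\|\tau\|^{2}+O(\|\omega\|\,\|\tau\|^{2})=\|t-t_{0}\|^{2}+O(\|\omega\|\,\|t-t_0\|^{2})$, the last equality following from $R_0\in SO(3)$ being an isometry. Thus, up to subleading terms in $\|\omega\|$, the MLE objective separates into a pure rotational part $\|\omega\|^{2}=\|[\log(R_{0}^{\top}R)]^{\vee}\|^{2}$ and a pure translational part $\|t-t_{0}\|^{2}$.

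Now I would apply the component-wise arguments. For the rotational coordinates, Proposition~\ref{prop:so3_MLE} already shows that the $\mathtt{SVD}$-based $J^{-1}_{SO}$ approximates the minimizer of $\|\omega\|^{2}$ given a Euclidean estimator $\mu\in\mathbb{R}^{9}$. For the translational coordinates, the direct Euclidean estimator of $t$ minimizes $\|t-t_{0}\|^{2}$ exactly, in the standard Gaussian MLE sense. Composing the two estimators, which is precisely what $J^{-1}_{SE}$ does under the semidirect product decomposition, yields an (approximate) joint MLE on $SE(3)$.

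The main obstacle is the rigorous justification of the decoupling step, because $V(\omega)^{-1}$ genuinely mixes $\omega$ and $\tau$, so the exact MLE does not factor as a product of the rotational and translational minimizers. The concentration hypothesis (namely that $\omega$ lies in a small neighborhood of $0$ with high probability) must be used to show that the cross term $O(\|\omega\|\,\|t-t_0\|^{2})$ is dominated by the leading quadratic terms; making this precise most likely requires either a $\sigma\to 0$ limiting argument or a uniform bound on $\|\omega\|$ on the support of the distribution, consistent with the approximation already invoked in the $SO(3)$ case.
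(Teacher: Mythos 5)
Your proof is correct, and it is actually more careful than the one the paper gives. The paper's proof of Proposition~\ref{prop:se3_MLE} is a verbatim repetition of the $SO(3)$ argument from Proposition~\ref{prop:so3_MLE}: it writes the concentrated-Gaussian log-likelihood, sets $\epsilon=[\log_{\mathcal{M}}(F_{\Theta_{NN}}-Y)]^{\vee}$, invokes concentration around the identity to treat $\epsilon$ as Gaussian on $\mathbb{R}^{3}$, and concludes $\arg\max L\approx\arg\min\|F_{\Theta_{NN}}-Y\|_F^{2}$ --- without ever separating the translational component, and indeed without changing $SO(3)$ to $SE(3)$ or $\mathbb{R}^{3}$ to $\mathbb{R}^{6}$ in the final line. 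Your route is genuinely different in that you make the semidirect product structure explicit: you identify the $SE(3)$ log coordinates as $(\omega, V(\omega)^{-1}\tau)$ with the left Jacobian $V(\omega)$, observe that $V(\omega)=I+O(\|\omega\|)$ under the concentration hypothesis so that the objective approximately splits into $\|\omega\|^{2}+\|t-t_0\|^{2}$, and then compose the $SO(3)$ chordal/SVD approximation with the exact Euclidean MLE for $t$. What your approach buys is a precise accounting of the rotation--translation coupling that the paper silently ignores; the cross term $O(\|\omega\|\,\|t-t_0\|^{2})$ you flag is exactly the gap in the paper's copy-paste argument, and your observation that controlling it requires either a $\sigma\to 0$ limit or a uniform bound on $\|\omega\|$ over the support is the honest statement of what ``approximation'' must mean here. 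The one point worth making explicit if you write this up: the rotational reduction still relies on the small-angle equivalence of geodesic and chordal distances (which is what licenses replacing $\|\omega\|^{2}$ by $\|R-R_0\|_F^{2}$ and hence the SVD projection), so the same concentration hypothesis is doing double duty in both the decoupling step and the $SO(3)$ step.
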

\begin{proof}
Thus, for $x \sim \mathcal{N}_{\mathcal{G}}(\mu, \Sigma)$, there is $x=\mu\exp_{\mathcal{M}}([N]^{\vee}_{\mathcal{M}}) $, and the simplified log-likelihood function is
\begin{equation}
\begin{aligned}
 & L(Y;F_{\Theta_{NN}},\Sigma) = \\ \notag
 & [\log_{\mathcal{M}}(F_{\Theta_{NN}}-Y)]_{\mathcal{M}}^{\vee\top}\Sigma^{-1}[\log_{\mathcal{M}}(F_{\Theta_{NN}}-Y)]_{\mathcal{M}}^{\vee}  
 \end{aligned}
\end{equation}
with $\log_{\mathcal{M}}(F_{\Theta_{NN}}-Y)]_{\mathcal{M}}^{\vee}$ set to be $\epsilon$, if the pdf focused around the group identity, i.e. the fluctuation of $\epsilon$ is small, the noise $\epsilon$ 's distribution could 
be approximated by $\mathcal{N}_{\mathbb{R}^3}(\mathbf{0}_{3\times1},\Sigma)$ on $\mathbb{R}^3$. Then there is 
$\arg\max\limits_{Y\in SO(3)}L(Y;F_{\Theta_{NN}},\Sigma) \approx \arg\min\limits_{Y\in SO(3)}(F_{\Theta_{NN}}-Y))^{\top}(F_{\Theta_{NN}}-Y))=\arg\min\limits_{Y\in SO(3)}\|F_{\Theta_{NN}}-Y\|_F^2$. 
\end{proof}
\subsubsection{Grassmann for subspaces}
\label{sec:grass_MLE}
One of the manifold versions of Gaussian distribution on Grassmann and Stiefel manifolds is Matrix Angular Central Gaussian (MACG). However, the matrix representation of linear subspaces shall preserve the consistency of eigenvalues across permutations and sign flips, thus we resort to the symmetric $UU^T$ on Symmetric Positive Definite (SPD) manifold for $U\in\mathcal{G}(m,\mathbb{R}^N)$.  
%

Similarly,  the manifold output on 
the error model is modified to be $F_{\Theta_{NN}} = YY^{\top} + N$, where both $YY^{\top}$ and $N$ are semi-positive definite matrix lying on SPD manifold. The Gaussian distribution extended on SPD manifold, for a random $2th-$order tensor $X\in \mathbb{R}^{n\times n}$ is
\begin{equation}\label{eq:grasspdf}
    P(X;M,S) = \frac{1}{\sqrt{(2\pi)^n}|\mathcal{S}|}\exp{\frac{1}{2}(X-M)^T\mathcal{S}^{-1}(X-M)}
\end{equation}
with mean $M\in \mathbb{R}^{m\times m}$ and $4th-$order
covariance tensor $\mathcal{S}\in \mathbb{R}^{m\times m\times m\times m}$ inheriting symmetries from three dimensions.
For the inverse of Grassmann manifold extrinsic embedding $J_{\mathcal{G}}^{-1}$, DEMR provides MLE, and the proofs are given in the Supplementary Material.
\begin{proposition}
\label{prop:grass_MLE}
DEMR with $J^{-1}_{\mathcal{G}}$  gives MLE of element on $\mathcal{G}(m.\mathbb{R}^n)$, if assuming $\Sigma$ is an identity matrix.
\end{proposition}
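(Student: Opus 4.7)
The plan is to adapt the $SO(3)$ and $SE(3)$ arguments to the SPD-Gaussian likelihood in Equation~\ref{eq:grasspdf}. Under the error model $F_{\Theta_{NN}} = YY^\top + N$ with $N$ drawn from the pdf of Equation~\ref{eq:grasspdf} with mean $0$ and covariance tensor $\mathcal{S} = I$, the negative log-likelihood of $Y$ given the network output collapses to
\[
\mathcal{L}(Y) \;=\; \tfrac{1}{2}\,\mathrm{tr}\bigl((F_{\Theta_{NN}} - YY^\top)^2\bigr) \;+\; \text{const},
\]
because the fourth-order identity tensor, contracted with a symmetric residual on both sides, reduces to the standard Frobenius inner product on symmetric matrices. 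Maximizing the likelihood over $Y$ is therefore equivalent to minimizing $\|F_{\Theta_{NN}} - YY^\top\|_F^2$ subject to $Y \in \mathcal{V}(m,n)$.

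Next I would invoke the symmetric Eckart-Young-Mirsky theorem. Writing the diagonal decomposition $F_{\Theta_{NN}} = U\Sigma U^\top$ with eigenvalues arranged in descending order, the best rank-$m$ symmetric PSD approximation is $U_{1:m}\Sigma_{1:m} U_{1:m}^\top$. Imposing the further constraint that the approximation take the form $YY^\top$ with $Y\in\mathcal{V}(m,n)$ amounts to absorbing the diagonal scaling into $Y$, and the minimizing column span is exactly the one spanned by the top $m$ eigenvectors of $F_{\Theta_{NN}}$. This is precisely the subspace returned by the rule $\mu^{ext} = USU^\top$ that defines $J^{-1}_{\mathcal{G}}$ in Section~\ref{sec:grassmann}, so the DEMR output coincides with the MLE.

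A concluding remark would address the consistency concern flagged just before the proposition: because the objective depends on $Y$ only through $YY^\top$, it is automatically invariant to right multiplication by elements of $O(m)$ (permutations and sign flips included), so the likelihood descends to a well-defined function on the quotient $\mathcal{G}(m,\mathbb{R}^n)$, justifying the claim that the estimate is genuinely a subspace rather than a basis. The main obstacle is the first step — making precise how ``$\mathcal{S}$ is the identity'' acts on a fourth-order covariance so that the quadratic form in Equation~\ref{eq:grasspdf} collapses to the Frobenius norm; once that bookkeeping is nailed down the remainder is a textbook application of Eckart-Young. A secondary subtlety is that ties among the top $m+1$ eigenvalues of $F_{\Theta_{NN}}$ make $J^{-1}_{\mathcal{G}}$ multi-valued, but this occurs on a measure-zero subset of the embedded space and does not affect the generic MLE statement.
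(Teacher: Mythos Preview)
Your first step---collapsing the SPD-Gaussian likelihood with identity covariance to the Frobenius objective $\|F_{\Theta_{NN}}-YY^\top\|_F^2$---is exactly what the paper does; the only cosmetic difference is that the paper carries this out through an explicit half-vectorization of the symmetric residual (the $\tilde{m}=m+\tfrac{m(m-1)}{2}$-dimensional unfolding and the matrix $\Sigma$ displayed just before the proposition), whereas you argue directly via the fourth-order tensor contraction. In fact the paper's proof \emph{stops} at that Frobenius reduction and never explicitly identifies the minimizer with the output of $J^{-1}_{\mathcal{G}}$, so your second step (pinning the minimizer to the top-$m$ eigenspace) and your quotient-invariance remark are genuine additions rather than repetitions.

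There is, however, a slip in that second step. Eckart--Young gives the best rank-$m$ approximation $U_{1:m}\Sigma_{1:m}U_{1:m}^\top$, but the constraint $Y\in\mathcal{V}(m,n)$ forces $YY^\top$ to be a rank-$m$ \emph{orthogonal projection} (eigenvalues in $\{0,1\}$), so there is no room to ``absorb the diagonal scaling into $Y$''. The right argument minimizes $\|F-P\|_F^2$ directly over rank-$m$ projections: working in the eigenbasis $F=U\Lambda U^\top$, one compares $\sum_{i\in S}(\lambda_i-1)^2+\sum_{i\notin S}\lambda_i^2$ across index sets $S$ of size $m$, and the optimum picks the $m$ largest $\lambda_i$. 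That yields the same top-$m$ eigenspace you claimed and matches $J^{-1}_{\mathcal{G}}$, so your conclusion survives; only the invocation of Eckart--Young needs to be replaced by this projection-specific computation.
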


\subsection{Generalization Ability}
\subsubsection{Failure of DNN with Euclidean output space}
In light of the analysis in \ref{sec:dnn_struct}, the output space is produced by a linear transformation on the convolutional feature space spanned by the extracted features.
The feature map extracted by a neural network organized as matrices in Equation (\ref{eq:combination}) the source to form the basis.
Denoting the linear subspace spanned by  the feature map basis ${c_1^{\mathcal{I}},\ldots,c_n^{\mathcal{I}}}$ to be $\mathbb{R}_{feature}$ and let $\mathbb{R}_{output}$ be the low-dimensional output space spanned by $\{o_1,\ldots,o_{n_o}\}$, where $n_o$ denotes the output dimension, then $F_{NN}(\mathcal{I})\in \mathbb{R}_{output}$. 
For a new test input $\mathcal{I}'$,  its extracted feature map belongs to the complementary space of $ \mathbb{R}_{feature}$, there is $F_{NN}(\mathcal{I}')\notin \mathbb{R}_{output}$. This accounts for the failure of some DNN models with Euclidean output.  

\subsubsection{Representation power of structured output space}
This section studies the enhancement of the representational power of DEMR, when the output space is endowed with geometrical structure.
As a linear action,
one representation of a Lie group is a smooth group homomorphism $\Pi:G\rightarrow GL(V)$ on the $n$-dimensional vector space $V$, where $GL(V)$ is a general linear group of all invertible linear transformations. 
\begin{proposition}
\label{prop:so_generalization}
Any element of dimension $n$ on $SO(n)$ belongs to the image of $J^{-1}_{SO(n)}$ from known rotations within a certain range, 
if the Euclidean input of $J^{-1}_{SO(n)}$ is of more than $n$ dimensions.
\end{proposition}
\begin{corollary}
\label{col:SEn}
Any element of dimension $n$ on $SE(n)$ belongs to the image of $J^{-1}_{SE(n)}$ from known rotations within a certain range, 
if the Euclidean input of $J^{-1}_{SE(n)}$ is of more than $n$ dimensions.
\end{corollary}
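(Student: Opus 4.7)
The plan is to leverage the semidirect product structure $SE(n) = SO(n) \ltimes T(n)$ recalled in Section \ref{sec:liegroup}, which decomposes any rigid transformation into a rotation and a translation component. Under this decomposition, the extrinsic embedding $J_{SE(n)}$ factors as the direct sum of the rotation embedding $J_{SO(n)}$ on the rotation part and the identity embedding on the translation part, so the inverse $J^{-1}_{SE(n)}$ acts coordinate-wise on the two blocks. This factorization is what makes the corollary follow almost immediately from the preceding proposition.

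First, I would write an element $g \in SE(n)$ as a pair $(R, t)$ with $R \in SO(n)$ and $t \in \mathbb{R}^n$, and observe that the embedded vector $J_{SE(n)}(g)$ decomposes as the concatenation of $J_{SO(n)}(R)$ and $t$. Applying Proposition \ref{prop:so_generalization} to the rotation component yields that $R$ lies in the image of $J^{-1}_{SO(n)}$ acting on a Euclidean input of dimension greater than $n$, provided the reference rotations used to span the feature subspace are within the range specified in that proposition. For the translation part, since $T(n) \cong \mathbb{R}^n$ is itself a Euclidean vector space, the extrinsic embedding and its inverse both reduce to the identity, so every $t \in \mathbb{R}^n$ is trivially attained as a linear combination of a generic basis of reference translations.

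Concatenating the two components, I would then check that the dimension hypothesis is preserved: the rotation block demands an input of strictly more than $n$ coordinates to satisfy Proposition \ref{prop:so_generalization}, and the translation block contributes an additional $n$ coordinates; the combined input to $J^{-1}_{SE(n)}$ therefore exceeds $n$ as required by the corollary statement. The range qualifier ``within a certain range'' simply propagates from the rotation factor, since no analogous restriction arises for translations.

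The main obstacle, such as it is, is bookkeeping rather than substance: one must make the splitting of $J_{SE(n)}$ into its $SO(n)$ and $T(n)$ blocks explicit enough that the hypotheses of Proposition \ref{prop:so_generalization} transfer verbatim to the rotation block while the translation block is handled by the elementary remark that a linear span of $n$ linearly independent translations covers all of $\mathbb{R}^n$. Once this decomposition is written down cleanly, the corollary is a direct consequence.
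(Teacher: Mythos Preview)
The paper does not actually supply a proof of this corollary: the appendix contains only a commented-out heading that restates the statement, and the main text simply places the corollary immediately after Proposition~\ref{prop:so_generalization} as if it were self-evident. Your approach---splitting $SE(n)=SO(n)\ltimes T(n)$ as recalled in Section~\ref{sec:liegroup}, invoking Proposition~\ref{prop:so_generalization} on the rotation block, and handling the translation block trivially since $T(n)\cong\mathbb{R}^n$ already carries the identity embedding---is precisely the decomposition the paper sets up and is the intended route; it is correct and nothing further is needed.
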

Then for a linear representation of Lie group in matrix form, given a set of basis on $V$, we show that the output of DEMR better extrapolates the input samples than common deep learning settings with unstructured output, which resolves the problem raised in \cite{sattler2019understanding}. 

\section{Experiments}
\label{sec:exp}

In this section, we demonstrate the effectiveness of applying extrinsic embedding to deep learning settings on two representative manifolds in computer vision.
The experiments are conducted from several aspects below:
\begin{itemize}
    \item Whether DEMR takes effects in improving the performance of certain tasks?
    \item Whether the geometrical structure boosts model performance facing unseen cases, \eg, the ability to extrapolate training set.
\item Whether extrinsic embedding yields valid geometrical restrictions.
\end{itemize}
 The validations are conducted on two canonical manifold applications in computer vision
\subsection{Task I: affine motions on $SE(3)$}
\label{sec:exp1}
Estimating the relative position and rotation between two point clouds has a wide range of downstream applications. 
The reference and target point clouds 
 $P_R, P_t \in \mathbb{R}^{N\times3}$ are in the same size and shape, with no scale transformations.
The relative translation and rotation can be arranged separately in vectors 
or together in a matrix lying on $SE(3)$.
\subsubsection{Experimental Setup}
\paragraph{Training Detail}
During training, at each iteration stage, a randomly chosen point cloud from $2,290$ airplanes is transformed by randomly sampling rotations and translations in batches. The rotations are sampled according to the models, \ie, the models producing axis angles are fed with rotations sampled from axis angles, and so on. 
%

\paragraph{Comparison metrics}
To validate the ability of the model to preserve geometrical structures, geodesic distance is the opted metric at the testing stage. 
\cite{zhou2019continuity,levinson2020analysis_svd} uses minimal angular difference to evaluate the differences between rotations, 
which is not entirely compatible with Euclidean groups, since it calculates the translation part and rotation part separately. 
For two rotations $R_1, R_2$ and $R' = R_1R_2^{-1}$ with its trace to be $tr(R')$, there is $L_{angle}=\cos^{-1}((tr(R')-1)/2)$.
For intrinsic metric, the geodesic distance between 
two group elements are defined with Frobenius norm for matrices
 $d_{int}(M_1, M_2)= \|\log(M_1^{-1}M_2)\|$, 
 where $\log(M_1^{-1}M_2)$ indicates the logarithm map on the Lie group.
 For extrinsic metric, we take Mean Squared Loss (MSE) between $J_{SE}(M_1)$ and $J_{SE}(M_2)$.
\begin{table}[H]
\centering
\caption{On the task of estimating relative affine transformations on $SE(3)$ for point clouds, the results are reported by the geodesic distance between the estimation and the ground truth. The best results across models are emphasized, where the smallest errors come from extrinsic embedding.}
\begin{tabular}{c|ccc}
    \toprule
    Mode&avg&median&std\\
    \midrule
         euler&26.83 & 15.54&32.23\\
         axis&31.96 & 10.66&27.75\\
         6D&10.28 &\textbf{5.36}&\textbf{21.54} \\
         9D&\textbf{10.23} &6.09&25.30\\
     \bottomrule  
    \end{tabular}
\label{tab:full_SE3}
\end{table}
 For models with Euler angle and axis-angle output,  the predicted rotation and translation are organized in Euclidean form, and they are reorganized into Lie algebra $se(3)$. Finally, the distances are calculated in $SO(3)$ by exerting an exponential map on their $se(3)$ forms.

\paragraph{Architecture detail}
Similar as \cite{zhou2019continuity,levinson2020analysis_svd}, 
the backbone for point cloud feature extraction is composed of 
a weight-sharing Siamese architecture containing two simplified PointNet Structures \cite{qi2017pointnet}
$\Phi_i:\mathbb{R}^{N\times3}\rightarrow \mathbb{R}^{1024}$
where $i=1,2$. 
After extracting the feature of each point with an MLP, 
both $\Phi_1$ and $\Phi_2$ utilize max pooling to produce a single vector $z_1, z_2$ respectively, 
as representations of features across all points in $P_r, P_t$. Concatenating $z_1, z_2$ to be $z$, 
another MLP is used for mapping the concatenated feature to the final higher-order Euclidean representation 
$Y_{\mathbb{E}}$. 
Finally, Euler angle quaternion coefficients and axis-angle representations will be directly obtained from the last linear layer of the backbone network of dimension, $3, 3, 4$ respectively.
For manifold output, the $SE(3)$ representation will be given by both cross-product and SVD, with details in Supplementary materials.
The translation part is produced by another line of $\mathtt{fc}$ layer of 3 dimensions.
 Because MSE computation between 
two isometric matrices are based on each entry,  total MSE loss is the sum of rotation loss
 and translation loss.
%
\paragraph{Estimation accuracy and conformance}
To validate the necessity of structured output space, 
firstly we compare the affine transformation estimating network 
with different output formations. 
The translation part naturally takes the form of a vector 
and the rotation part can be in the form of Euclidean representations 
and $SO(3)$ in matrix forms.
%
The dataset is composed of generated point cloud pairs with random transformations, where the rotations are uniformly sampled from various representations, namely, Euler angle, axis-angle in $3$-dimensional vector space, and $SO(3)$. The translation part is randomly sampled from a standard normal distribution. 

The advantage of manifold output produced by baseline DEMR is revealed in Table \ref{tab:full_SE3} and Figure \ref{fig:percentile}, output via extrinsic embedding takes the lead across three statistics.
\begin{figure}[H]
    \centering
\includegraphics[width=5cm]{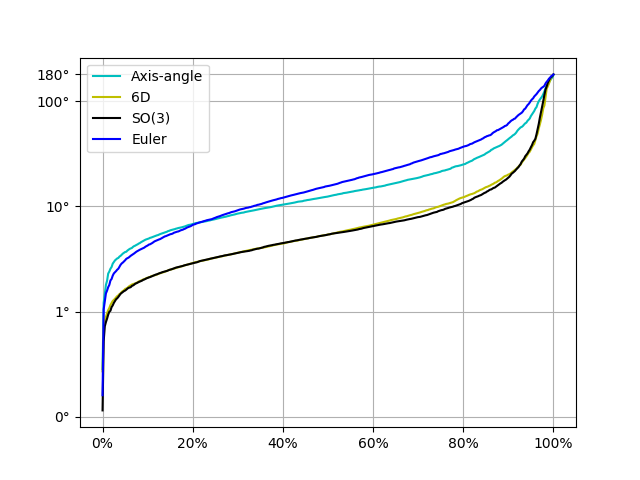} 	
    \caption{The cumulative distributions comparison of position errors for the pose regression task on $SE(3)$.}
	\label{fig:percentile}
\end{figure}
\begin{table*}
    \centering
    \small
    \caption{Comparison of different embeddings. DEMR takes the lead in predicting accuracy, with two types of embedding functions, namely, 6D and 9D.  Even when the ratio of unseen cases increases, DEMR has better performance all along.}
    \scalebox{0.85}{
    \begin{tabular}{c|ccccccccccccccc}
    \toprule
    &  
    \multicolumn{3}{c}{10\%}&\multicolumn{3}{c}{20\%}&\multicolumn{3}{c}{40\%}&\multicolumn{3}{c}{60\%}&\multicolumn{3}{c}{80\%}\cr
    \cmidrule(lr){2-4}\cmidrule(lr){5-7}\cmidrule(lr){8-10}\cmidrule(lr){11-13}\cmidrule(lr){14-16}
    &avg&med&std &avg&med&std &avg&med&std &avg&med&std &avg&med&std\cr
        euler & 77.81&54.42&77.52 &35.58&13.07&48.94 &28.82&17.02&30.89&27.22&16.27&35.52 &27.22&16.27&31.75 \\
        axis &75.57 &50.71 &78.60 &36.95 &12.38 &54.40&20.73&12.51&27.63&21.95&12.75&27.15&18.96&10.66&24.75\\
        6D&71.09&\textbf{29.39}&80.09&\textbf{17.12}&\textbf{5.25} &40.54&\textbf{10.85}&\textbf{5.07}&\textbf{19.60}&\textbf{9.76}&\textbf{4.93}&\textbf{19.60}&10.29&\textbf{5.01}&\textbf{21.08} \\
        9D&\textbf{70.72}&30.99 &\textbf{63.18}&19.83&5.62&\textbf{39.22}&12.91&6.28&22.66&12.24&6.40&24.18&\textbf{10.23}&5.31&22.60 \\
    \toprule
    \end{tabular}
    }    
    \label{tab:generalization}
\end{table*}
\begin{table*}
    \centering
        \small
        \caption{Test result reported in average $D_{\mathcal{G}}$ with an example result of a test sample, and the epoch at which the model converged.}
    \begin{tabular}{c|ccc}
    \toprule
        Input & GrassmannNet \cite{lohit2017dl_mani} & DIMR & DEMR\\
        \midrule

         \begin{minipage}
         [b]{0.3\columnwidth}
		\centering
		\raisebox{.2\height}{\includegraphics[width=\linewidth]{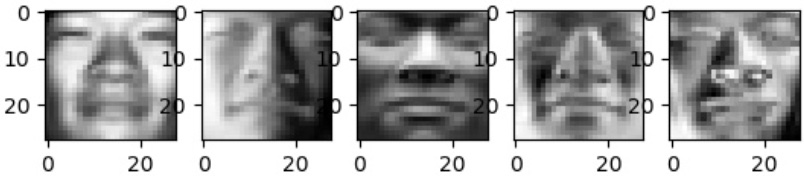}}
        \end{minipage}  
        &
         \begin{minipage}
         [b]{0.3\columnwidth}
		\centering
		\raisebox{.2\height}{\includegraphics[width=\linewidth]{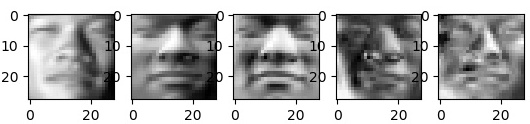}}
        \end{minipage}
        &
        \begin{minipage}
         [b]{0.3\columnwidth}
		\centering
		\raisebox{.2\height}{\includegraphics[width=\linewidth]{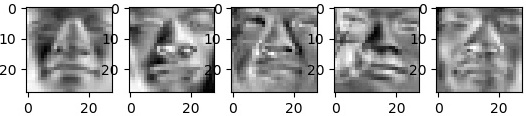}}
        \end{minipage}  
        &
        \begin{minipage}
         [b]{0.3\columnwidth}
		\centering
		\raisebox{.2\height}{\includegraphics[width=\linewidth]{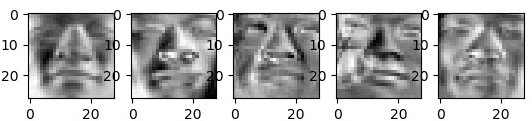}}
        \end{minipage}  
\\
        avg $D_{\mathcal{G}}(U_{gt}, U_{out})$& 9.6826& 3.4551&3.4721 \\
     epoch & 320 & 680&140\\
     
    \bottomrule   
    \end{tabular}
    \label{tab:grassresult}
\end{table*}

\paragraph{Generalization ability}
To evaluate DEMR over its improvement in generalization ability in comparison with unstructured output space, the training set 
only contains a small portion of the whole affine transformation space while the test set encompasses affine transformations 
sampled from the whole rotation space.
To be fair in the sampling step across the compared representations, the sampling process is conducted uniformly in 
the representations spaces respectively.  
 For models with axis-angle and Euler angle output, the input point cloud pair is constructed with rotation representation whose entries
  sampled $\mathrm{iid}$ from Euclidean ranges.
 For Euler representations, the lower bounds of the ranges are  $\pi$, 
 and the upper bounds are $\pi$ multiplied by $90\%, 80\%, 60\%, 40\%,20\%$, 
 while the test set is sampled from $[-\pi, \pi]$. 
 The training set with axis-angle representations is also obtained by sampling Euclidean ranges. 
 When constructing a training set on $SO(3)$, the uniform sampling process is conducted in the same way with axis-angle sampling, the segments are taken in ratios in the same way with the two aforementioned settings,  and then exponential mapping is used to yield random samples on $SO(3)$.
Facing unseen cases, DEMR has better extrapolation capability compared to plain deep learning settings with unstructured output space.  
As shown in Table \ref{tab: generalization}, results from deep extrinsic learning show great competence despite the portion size of the training set. 

\subsection{Task II: illumination subspace, the Grassmann manifold}
Changes in poses, expressions, and illumination conditions are inevitable in real-world applications,
but seriously impair the performance of face recognition models.
It is universally acknowledged that an image set of a human face under different conditions
 lies close to a low-dimensional Euclidean subspace. 
After vectorizing images of each person into one matrix in dataset Yale-B \cite{YaleB}, 
experiments reveal that the top $d=5$ principal components (PC) generally capture more than $90\%$ of the singular values.
As an essential application of DEMR, we take the feature extraction CNN as $T(\cdot)$ in Equation \ref{eq: combination}.
in high-dimensional Euclidean space, and obtain the final result by
inversely mapping the estimation to the Grassmann manifold.

\subsubsection{Experimental Setup}
\paragraph{Architectural detail}
A similar work sharing the same background comes from \cite{lohit2017dl_mani}, which assumes the output of the last layer in the neural network to lie on the matrix manifolds or its tangent space. We take GrassmannNet in the former assumption in \cite{lohit2017dl_mani} as the baseline, where the training loss function is set to be Mean Squared Loss. For all the models to be compared, the ratio of the training set is $0.8$, and all of the illumination angles are adopted.
In addition, the data preparation step also follows \cite{lohit2017dl_mani},  and other training details are recorded in supplementary materials.  
On extended Yale Face Database B, each grayscale image of $168\times192$ is firstly resized to $256\times256$ as the input of CNN, while for subspace ground truth, they are resized to $28\times28=784$ for convenience of computation.  
 \paragraph{Dataset processing}
The input face image $I_i^j$ indicates the $i_{th}$ face under the $j_{th}$ illumination condition, where $j=1,\ldots, 64$ for Yale B dataset.  If we adopt the top $d$ PCs, the output is assumed to be in the form of $\{E_i^1, E_i^2,\ldots, E_I^d\}$, $k=1\ldots, d$ and $<E_i^k, E_i^l>=\delta_{kl}$ where $\delta_{kl}$ is a Kronecker delta function. Each PC is rearranged as a vector, thus $vec(E_i^k)$ is of size $784\times1$ and we define $U_i = [vec(E_i^1), vec(E_i^2) \ldots vec(E_i^d)]$, and the output of the network is $U_iU_i^T$.
Most of the intrinsic distances defined on the Grassmannian manifold are based on principal angles (PA), for $P_1, P_2\in \mathcal{G}(m, \mathbb{R}^n)$,
with SVD $P_1^{T}P_2=USV^{T}$ where $S=diag\{cos(\theta_1),cos(\theta_2),\ldots,cos(\theta_m)\}$, PAs of Binet-Cauchy (BC) distance is $1-\Pi_{k=1}^K\cos^2\theta_k$ 
and PAs of Martin (MA) distance is $\log\Pi_{k=1}^K(\cos^2\theta_k)^{-1}$.
For comparison of effectiveness and training advantage,  the deep intrinsic learning takes $D_{\mathcal{G}}(U_{gt}, U_{out}) = $ as both the training loss and testing metric, where $U_{gt}$ and $U_{out}$ indicate the ground truth and the network output respectively.

\paragraph{Estimation accuracy and conformance}
The results of illumination subspace estimation are given in Table \ref{tab:grassresult}. Measured by the average of geodesic losses, DEMR architecture achieves nearly the same performance as DIMR while drastically reducing training time till convergence. 
With batch size set to be 10, we record training loss after every 10 epochs and report approximately the epoch at which the training loss converged in the third row in Table \ref{tab:grassresult}.
Besides, both DIMR and DEMR surpass GrassmannNet in the task of illumination subspace estimation, and DEMR still takes fewer epochs for training than GrassmannNet.
In the first row of Table \ref{tab:grassresult}, on one of the test samples, the results from GrassmannNet, DIMR, and DEMR are displayed. In addition, the conformance of extrinsic and intrinsic metrics can be referred to as supplementary materials.

\section{Conclusion}
This paper presents Deep Extrinsic Manifold Representation (DEMR), which incorporates extrinsic embedding into DNNs to circumvent the direct computation of intrinsic information. Experimental results demonstrate that retaining the geometric structure of the manifold enhances overall performance and generalization ability in the original tasks. Furthermore, extrinsic embedding exhibits superior computational advantages over intrinsic methods. Looking ahead, the prospect of a more unified formulation for extrinsic embedding techniques within deep learning settings, characterized by parameterized approaches, is envisioned.

\bibliography{ref}

\begin{thebibliography}{33}
\providecommand{\natexlab}[1]{#1}
\providecommand{\url}[1]{\texttt{#1}}
\expandafter\ifx\csname urlstyle\endcsname\relax
  \providecommand{\doi}[1]{doi: #1}\else
  \providecommand{\doi}{doi: \begingroup \urlstyle{rm}\Url}\fi

\bibitem[Berkels et~al.(2013)Berkels, Fletcher, Heeren, Rumpf, and Wirth]{berkels2013discrete_shapespace}
Berkels, B., Fletcher, P.~T., Heeren, B., Rumpf, M., and Wirth, B.
\newblock Discrete geodesic regression in shape space.
\newblock In \emph{International Workshop on Energy Minimization Methods in Computer Vision and Pattern Recognition}, pp.\  108--122. Springer, 2013.

\bibitem[Bermudez et~al.(2018)Bermudez, Plassard, Davis, Newton, Resnick, and Landman]{bermudez2018manilearning}
Bermudez, C., Plassard, A.~J., Davis, L.~T., Newton, A.~T., Resnick, S.~M., and Landman, B.~A.
\newblock Learning implicit brain mri manifolds with deep learning.
\newblock In \emph{Medical Imaging 2018: Image Processing}, volume 10574, pp.\  105741L. International Society for Optics and Photonics, 2018.

\bibitem[Bhattacharya \& Patrangenaru(2003)Bhattacharya and Patrangenaru]{bhattacharya2003largesampletheorey}
Bhattacharya, R. and Patrangenaru, V.
\newblock Large sample theory of intrinsic and extrinsic sample means on manifolds.
\newblock \emph{The Annals of Statistics}, 31\penalty0 (1):\penalty0 1--29, 2003.

\bibitem[Bhattacharya et~al.(2012)Bhattacharya, Ellingson, Liu, Patrangenaru, and Crane]{bhattacharya2012extrinsic_machine_vision}
Bhattacharya, R.~N., Ellingson, L., Liu, X., Patrangenaru, V., and Crane, M.
\newblock Extrinsic analysis on manifolds is computationally faster than intrinsic analysis with applications to quality control by machine vision.
\newblock \emph{Applied Stochastic Models in Business and Industry}, 28\penalty0 (3):\penalty0 222--235, 2012.

\bibitem[Boumal(2020)]{intro2manopt}
Boumal, N.
\newblock An introduction to optimization on smooth manifolds.
\newblock \emph{Available online, May}, 3, 2020.

\bibitem[Bourmaud et~al.(2015)Bourmaud, M{\'e}gret, Arnaudon, and Giremus]{bourmaud2015EKF}
Bourmaud, G., M{\'e}gret, R., Arnaudon, M., and Giremus, A.
\newblock Continuous-discrete extended kalman filter on matrix lie groups using concentrated gaussian distributions.
\newblock \emph{Journal of Mathematical Imaging and Vision}, 51:\penalty0 209--228, 2015.

\bibitem[Bronstein et~al.(2017)Bronstein, Bruna, LeCun, Szlam, and Vandergheynst]{bronstein2017geometric}
Bronstein, M.~M., Bruna, J., LeCun, Y., Szlam, A., and Vandergheynst, P.
\newblock Geometric deep learning: going beyond euclidean data.
\newblock \emph{IEEE Signal Processing Magazine}, 34\penalty0 (4):\penalty0 18--42, 2017.

\bibitem[Bronstein et~al.(2021)Bronstein, Bruna, Cohen, and Veli{\v{c}}kovi{\'c}]{bronstein2021GDL}
Bronstein, M.~M., Bruna, J., Cohen, T., and Veli{\v{c}}kovi{\'c}, P.
\newblock Geometric deep learning: Grids, groups, graphs, geodesics, and gauges.
\newblock \emph{arXiv preprint arXiv:2104.13478}, 2021.

\bibitem[Byravan \& Fox(2017)Byravan and Fox]{byravan2017se3nets}
Byravan, A. and Fox, D.
\newblock Se3-nets: Learning rigid body motion using deep neural networks.
\newblock In \emph{2017 IEEE International Conference on Robotics and Automation (ICRA)}, pp.\  173--180. IEEE, 2017.

\bibitem[Can et~al.(2021)Can, Utku, Unal, and Alatas]{can2021dt_science_geom}
Can, U., Utku, A., Unal, I., and Alatas, B.
\newblock Deeper in data science: Geometric deep learning.
\newblock \emph{PROCEEDINGS BOOKS}, pp.\ ~21, 2021.

\bibitem[Cao et~al.(2020)Cao, Yan, He, and He]{cao2020geometric_review}
Cao, W., Yan, Z., He, Z., and He, Z.
\newblock A comprehensive survey on geometric deep learning.
\newblock \emph{IEEE Access}, 8:\penalty0 35929--35949, 2020.

\bibitem[Chen et~al.(2021)Chen, Yin, Birdal, Chen, Guibas, and Wang]{rpmg}
Chen, J., Yin, Y., Birdal, T., Chen, B., Guibas, L., and Wang, H.
\newblock Projective manifold gradient layer for deep rotation regression.
\newblock \emph{arXiv preprint arXiv:2110.11657}, 2021.

\bibitem[Cornea et~al.(2017)Cornea, Zhu, Kim, Ibrahim, and Initiative]{cornea2017geodregression_on_sym}
Cornea, E., Zhu, H., Kim, P., Ibrahim, J.~G., and Initiative, A. D.~N.
\newblock Regression models on riemannian symmetric spaces.
\newblock \emph{Journal of the Royal Statistical Society: Series B (Statistical Methodology)}, 79\penalty0 (2):\penalty0 463--482, 2017.

\bibitem[Fang et~al.(2023)Fang, Ohn, Gupta, and Lin]{fang2023intrinsic}
Fang, Y., Ohn, I., Gupta, V., and Lin, L.
\newblock Intrinsic and extrinsic deep learning on manifolds.
\newblock \emph{arXiv preprint arXiv:2302.08606}, 2023.

\bibitem[Fletcher(2013)]{fletcher2013geodesic}
Fletcher, P.~T.
\newblock Geodesic regression and the theory of least squares on riemannian manifolds.
\newblock \emph{International journal of computer vision}, 105\penalty0 (2):\penalty0 171--185, 2013.

\bibitem[Fletcher(2011)]{fletcher2011geodesicregress}
Fletcher, T.
\newblock Geodesic regression on riemannian manifolds.
\newblock In \emph{Proceedings of the Third International Workshop on Mathematical Foundations of Computational Anatomy-Geometrical and Statistical Methods for Modelling Biological Shape Variability}, pp.\  75--86, 2011.

\bibitem[Georghiades et~al.(2001)Georghiades, Belhumeur, and Kriegman]{YaleB}
Georghiades, A., Belhumeur, P., and Kriegman, D.
\newblock From few to many: Illumination cone models for face recognition under variable lighting and pose.
\newblock \emph{IEEE Trans. Pattern Anal. Mach. Intelligence}, 23\penalty0 (6):\penalty0 643--660, 2001.

\bibitem[Hinkle et~al.(2012)Hinkle, Muralidharan, Fletcher, and Joshi]{hinkle2012polynomial}
Hinkle, J., Muralidharan, P., Fletcher, P.~T., and Joshi, S.
\newblock Polynomial regression on riemannian manifolds.
\newblock In \emph{European conference on computer vision}, pp.\  1--14. Springer, 2012.

\bibitem[Huang et~al.(2021)Huang, Cai, Dan, Lin, Laurienti, and Wu]{huang2021detecting_mani}
Huang, Z., Cai, H., Dan, T., Lin, Y., Laurienti, P., and Wu, G.
\newblock Detecting brain state changes by geometric deep learning of functional dynamics on riemannian manifold.
\newblock In \emph{International Conference on Medical Image Computing and Computer-Assisted Intervention}, pp.\  543--552. Springer, 2021.

\bibitem[Khayatkhoei et~al.(2018)Khayatkhoei, Singh, and Elgammal]{khayatkhoei2018disconnected}
Khayatkhoei, M., Singh, M.~K., and Elgammal, A.
\newblock Disconnected manifold learning for generative adversarial networks.
\newblock \emph{Advances in Neural Information Processing Systems}, 31, 2018.

\bibitem[Lee(2021)]{lee2021robustextrinsic}
Lee, H.
\newblock Robust extrinsic regression analysis for manifold valued data.
\newblock \emph{arXiv preprint arXiv:2101.11872}, 2021.

\bibitem[Levinson et~al.(2020)Levinson, Esteves, Chen, Snavely, Kanazawa, Rostamizadeh, and Makadia]{levinson2020analysis_svd}
Levinson, J., Esteves, C., Chen, K., Snavely, N., Kanazawa, A., Rostamizadeh, A., and Makadia, A.
\newblock An analysis of svd for deep rotation estimation.
\newblock \emph{arXiv preprint arXiv:2006.14616}, 2020.

\bibitem[Lin et~al.(2017)Lin, St.~Thomas, Zhu, and Dunson]{lin2017extrinsic}
Lin, L., St.~Thomas, B., Zhu, H., and Dunson, D.~B.
\newblock Extrinsic local regression on manifold-valued data.
\newblock \emph{Journal of the American Statistical Association}, 112\penalty0 (519):\penalty0 1261--1273, 2017.

\bibitem[Lohit \& Turaga(2017)Lohit and Turaga]{lohit2017dl_mani}
Lohit, S. and Turaga, P.
\newblock Learning invariant riemannian geometric representations using deep nets.
\newblock In \emph{Proceedings of the IEEE International Conference on Computer Vision Workshops}, pp.\  1329--1338, 2017.

\bibitem[Ni et~al.(2021)Ni, Koniusz, Hartley, and Nock]{ni2021manifoldGAN}
Ni, Y., Koniusz, P., Hartley, R., and Nock, R.
\newblock Manifold learning benefits gans.
\newblock \emph{arXiv preprint arXiv:2112.12618}, 2021.

\bibitem[Qi et~al.(2017)Qi, Su, Mo, and Guibas]{qi2017pointnet}
Qi, C.~R., Su, H., Mo, K., and Guibas, L.~J.
\newblock Pointnet: Deep learning on point sets for 3d classification and segmentation.
\newblock In \emph{Proceedings of the IEEE conference on computer vision and pattern recognition}, pp.\  652--660, 2017.

\bibitem[Sattler et~al.(2019)Sattler, Zhou, Pollefeys, and Leal-Taixe]{sattler2019understanding}
Sattler, T., Zhou, Q., Pollefeys, M., and Leal-Taixe, L.
\newblock Understanding the limitations of cnn-based absolute camera pose regression.
\newblock In \emph{Proceedings of the IEEE/CVF Conference on Computer Vision and Pattern Recognition}, pp.\  3302--3312, 2019.

\bibitem[Shi et~al.(2009)Shi, Styner, Lieberman, Ibrahim, Lin, and Zhu]{shi2009intrinsic}
Shi, X., Styner, M., Lieberman, J., Ibrahim, J.~G., Lin, W., and Zhu, H.
\newblock Intrinsic regression models for manifold-valued data.
\newblock In \emph{International Conference on Medical Image Computing and Computer-Assisted Intervention}, pp.\  192--199. Springer, 2009.

\bibitem[Shin \& Oh(2022)Shin and Oh]{shin2022robust_geo}
Shin, H.-Y. and Oh, H.-S.
\newblock Robust geodesic regression.
\newblock \emph{International Journal of Computer Vision}, pp.\  1--26, 2022.

\bibitem[Steinke et~al.(2010)Steinke, Hein, and Sch{\"o}lkopf]{steinke2010between_mani}
Steinke, F., Hein, M., and Sch{\"o}lkopf, B.
\newblock Nonparametric regression between general riemannian manifolds.
\newblock \emph{SIAM Journal on Imaging Sciences}, 3\penalty0 (3):\penalty0 527--563, 2010.

\bibitem[Yang et~al.(2022)Yang, Vemuri, et~al.]{yang2022nestedgrassmann}
Yang, C.-H., Vemuri, B.~C., et~al.
\newblock Nested grassmannians for dimensionality reduction with applications.
\newblock \emph{Machine Learning for Biomedical Imaging}, 1\penalty0 (IPMI 2021 special issue):\penalty0 1--10, 2022.

\bibitem[Zhang(2020)]{zhang2020bayesian_geo}
Zhang, Y.
\newblock Bayesian geodesic regression on riemannian manifolds.
\newblock \emph{arXiv preprint arXiv:2009.05108}, 2020.

\bibitem[Zhou et~al.(2019)Zhou, Barnes, Lu, Yang, and Li]{zhou2019continuity}
Zhou, Y., Barnes, C., Lu, J., Yang, J., and Li, H.
\newblock On the continuity of rotation representations in neural networks.
\newblock In \emph{Proceedings of the IEEE/CVF Conference on Computer Vision and Pattern Recognition}, pp.\  5745--5753, 2019.

\end{thebibliography}
\bibliographystyle{icml2024}

\section*{Appendix}
\subsubsection{Proofs in Section \ref{sec_analysis}}
\paragraph{Consistency of extrinsic loss and intrinsic loss}
\paragraph{Proof of lemma \ref{lem:lipshitz}}
\begin{lemma}
Suppose that $\mathcal{M}_1, \mathcal{M}_2$ are smooth and compact Riemannian manifolds,$f:\mathcal{M}_1 \rightarrow \mathcal{M}_2$ is a diffeomorphism. Then $f$ is bilipschitz \wrt the Riemannian distance.
\end{lemma}
\begin{proof}
For $\mathcal{M}_1$ and $\mathcal{M}_2$ with smooth metrics $\rho_1$ and $\rho_2$ respectively, 
considering the smooth and continuous map  $\mathrm{d}f:U_1\rightarrow T\mathcal{M}_2$, where $U_1$ indicates the unit tangent bundle of $\mathcal{M}_1$  and $T\mathcal{M}_2$ represents the tangent space, then the function $\phi(u)=|df(u)|, u\in U_1$ is continuous too. Since $U_1$ is compact, there exists the maximum $C$. Denoting the length of path $r:[a,b]\rightarrow \mathcal{M}_1$ to be $L(\cdot)$, then there is $
L(f\circ r) = \int_a^b|(f\circ r)'|\mathrm{d}t\leq C\int_a^b|c'(t)|\mathrm{d}t=CL(r)$
thus $\rho_2(f(t_1), f(t_2))\leq C\rho_1(t_1, t_2), t_1, t_2\in \mathcal{M}_1$, then $f$ is $C-$ Lipshitz and $f^{-1}$ is $\frac{1}{C}-$ Lipshitz.
\end{proof}

\paragraph{Proof of proposition
\ref{prop:conformance}}
%
The extrinsic metric could reflect the tendency of the intrinsic metric. 
Here we pay more attention to consistency when the losses are small because consistency is crucial in extrinsic loss convergence.
To be more specific, for a smooth embedding $J:\mathcal{M}\rightarrow\mathbb{R}^m$, where the n-manifold $\mathcal{M}$ is compact, and its metric is denoted by $\rho$, 
the metric of $\mathbb{R}^m$ is $d$. The minimal change of $d$ suggests the minimal change of $\rho$, which will be a direct result if $J$ is bilipshitz. Here the condition of compactness of the embedded space will be relaxed, where $J$ is only desired to be locally bilipshitz, while the proof follows the same idea of the lemma above.
\begin{proposition}
For a smooth embedding $J:\mathcal{M}\rightarrow\mathbb{R}^m$, where the n-manifold $\mathcal{M}$ is compact, and its metric is denoted by $\rho$, the metric of $\mathbb{R}^m$ is denoted by $d$. For any two sequences $\{x_k\}, \{y_k\}$ of points in $\mathcal{M}$ and their images $\{J(x_k)\}, \{J(y_k)\}$, if $\lim\limits_{n\rightarrow\infty}d(J(x_n), J(y_n))=0$ then $\lim\limits_{n\rightarrow\infty}\rho(x_n, y_n)=0$.
\end{proposition}
\begin{proof}
Given that every compact submanifold $Y_s = J(\mathcal{M}) \subset Y$ has positive normal injectivity radius, where $Y\subset \mathbb{R}^N$,  there exists a positive constant $r$,  such that 
$B_r v_{Y_s}=\{v\in v_{Y_s}:|v|\leq r\}$ with 
$v_{Y_s}$ to be the normal bundle of $Y_s$ in $Y$, 
and the normal exponential map $\exp_{Y_s}:v_{Y_s}\rightarrow Y$ is a diffeomorphism onto its image $S = \exp(B_r(v_{Y_s}))$. 
Because the exponential map $\exp_{Y_s}$  of $\mathcal{M}$ is a diffeomorphism,  $S$ is an open neighborhood of $Y_s$, and the inverse of $\exp_{Y_s}$,  $\log_{Y_s}$ is smooth.  
Let the retraction $g=p \circ \log_{Y_s} :S\rightarrow Y_s$ to be  $\log_{Y_s}\circ \exp_{Y_s}$ as the composition of the projection $p: Y_s\rightarrow Y$ and $\log_{Y_s}$,  the closure of $B_{r'}v_{Y_s}$ to be  $\Bar{B}_{r'}v_{Y_s}$ for $0<r'<r$, then its image under $\exp_{Y_s}$ would be a compact submanifold with boundary $Z\subset Y_s$. Hence there exists a real constant $C_1 \geq 0$ such that $g$ is $C_1$-Lipshitz on $Z$. 
 From lemma 1 there exists a real constant $C_2\geq 0$ such that $J$ is $C_2$-lipshitz.
Moreover, since $g$ is the right inverse of the inclusion map $i: Y_s\rightarrow Y$, it follows that $i$ is $C_1$-bilipshitz, when restricted to the sufficiently small ball $B_{\epsilon}$, where $\epsilon < r'$.

Next, define the map $\phi(y_1, y_2) = \frac{d_{Y_s}(y_1, y_2)}{d_{Y}(y_1, y_2)}$, where $(y_1, y_2)\in Y_s\times Y_s$ and $y_1\neq y_2$, and $\phi$ is continuous since the Euclidean distance function $d_{Y_s}$ and $d_Y$ is continuous. For a compact subset $S_{comp}\subset Y_s\times Y_s$, there is a real value $C_3>0$ such that $\phi$ on $S_{comp}$ is bounded by $C_3$. Finally, J is locally $C_{max}$-bilipshitz, where $C_{max}=\max(C_1, C_2, C_3)$. Further, for any two sequences $\{x_k\}, \{y_k\}$ of points in $\mathcal{M}$ and their images $\{J(x_k)\}, \{J(y_k)\}$, if $\lim\limits_{n\rightarrow\infty}d(J(x_n), J(y_n))=0$, there is $\rho(x_n, y_n) \leq C_{max}d(J(x_n), J(y_n))$, $\lim\limits_{n\rightarrow\infty}\rho(x_n, y_n)=0$.
\end{proof}

\subsubsection{Asymptotic maximum likelihood estimation (MLE)}
\paragraph{Proof of proposition \ref{prop:so3_MLE}}
\begin{proposition}
$J^{-1}_{SO}:\mathbb{R}^9\rightarrow SO(3)$  gives an approximation of MLE of rotations on $SO(3)$, if assuming $\Sigma = \sigma I$,  $I$ is the identity matrix and $\sigma$ an arbitrary real value.
\end{proposition}
\begin{proof}
For $x \sim \mathcal{N}_{\mathcal{G}}(\mu, \Sigma)$, there is $x=\mu\exp_{\mathcal{M}}([N]^{\vee}_{\mathcal{M}}) $, and the simplified log-likelihood function is
\begin{equation}
 L(Y;F_{\Theta_{NN}},\Sigma) = [\log_{\mathcal{M}}(F_{\Theta_{NN}}-Y)]_{\mathcal{M}}^{\vee\top}\Sigma^{-1}[\log_{\mathcal{M}}(F_{\Theta_{NN}}-Y)]_{\mathcal{M}}^{\vee}   
\end{equation}
with $\log_{\mathcal{M}}(F_{\Theta_{NN}}-Y)]_{\mathcal{M}}^{\vee}$ set to be $\epsilon$, if the pdf focused around the group identity, i.e. the fluctuation of $\epsilon$ is small, the noise $\epsilon$ 's distribution could 
be approximated by $\mathcal{N}_{\mathbb{R}^3}(\mathbf{0}_{3\times1},\Sigma)$ on $\mathbb{R}^3$. Then there is 
$\arg\max\limits_{Y\in SO(3)}L(Y;F_{\Theta_{NN}},\Sigma) \approx \arg\min\limits_{Y\in SO(3)}(F_{\Theta_{NN}}-Y))^{\top}(F_{\Theta_{NN}}-Y))=\arg\min\limits_{Y\in SO(3)}\|F_{\Theta_{NN}}-Y\|_F^2$. 
\end{proof}

\paragraph{Proof of proposition \ref{prop:se3_MLE}}
\begin{proposition}
$J^{-1}_{SE}:\mathbb{R}^9\rightarrow SE(3)$ where the rotation part comes from $\mathtt{SVD}$,  gives an approximation of MLE of transformations on $SE(3)$, if assuming $\Sigma = \sigma I$,  $I$ is the identity matrix and $\sigma$ an arbitrary real value.
\end{proposition}
\begin{proof}
Thus, for $x \sim \mathcal{N}_{\mathcal{G}}(\mu, \Sigma)$, there is $x=\mu\exp_{\mathcal{M}}([N]^{\vee}_{\mathcal{M}}) $, and the simplified log-likelihood function is
\begin{equation}
 L(Y;F_{\Theta_{NN}},\Sigma) = [\log_{\mathcal{M}}(F_{\Theta_{NN}}-Y)]_{\mathcal{M}}^{\vee\top}\Sigma^{-1}[\log_{\mathcal{M}}(F_{\Theta_{NN}}-Y)]_{\mathcal{M}}^{\vee}   
\end{equation}
with $\log_{\mathcal{M}}(F_{\Theta_{NN}}-Y)]_{\mathcal{M}}^{\vee}$ set to be $\epsilon$, if the PDF focused around the group identity, i.e. the fluctuation of $\epsilon$ is small, the noise $\epsilon$ 's distribution could 
be approximated by $\mathcal{N}_{\mathbb{R}^3}(\mathbf{0}_{3\times1},\Sigma)$ on $\mathbb{R}^3$. Then there is 
$\arg\max\limits_{Y\in SO(3)}L(Y;F_{\Theta_{NN}},\Sigma) \approx \arg\min\limits_{Y\in SO(3)}(F_{\Theta_{NN}}-Y))^{\top}(F_{\Theta_{NN}}-Y))=\arg\min\limits_{Y\in SO(3)}\|F_{\Theta_{NN}}-Y\|_F^2$. 
\end{proof}

\paragraph{Proof of proposition \ref{prop:grass_MLE}}
For $\mathcal{G}(m, \mathbb{R}^n)$, the probability density function of MACG is
\begin{align}
    P(x;R_{\mu}, \Sigma) &= \|\Sigma\|^{-\frac{r}{2}}\|(R_{\mu}x)^{\top}\Sigma^{-1}(R_{\mu}x)\|^{-\frac{p}{2}}\\ \notag
    & = \|\Sigma\|^{-\frac{r}{2}}\|(x)^{\top}\Sigma^{-1}(x)\|^{-\frac{p}{2}}
\end{align}
where $R_{\mu} \in O(n)$ and covariance matrix $\Sigma$ is a symmetric positive-definite.

As clarified in section \ref{sec:grassmann}, we obtain Grassmann matrix by equivalently computing on $SPD_{m}^{++}$ with the diffeomorphic mapping $J^{-1}_{\mathcal{G}}=\mathtt{UU^T}$. 
 The error model is modified to be $F_{\Theta_{NN}} = Y^{\top}Y + N$, where both $Y^{\top}Y$ and $N$ are semi-positive definite matrix (SPD) $SPD^{++}_m$. The Gaussian distribution extended on SPD manifold, for a random $2th-$order tensor $X\in \mathbb{R}^{m\times m}$ (matrix) is
\begin{equation}
    P(X;M,S) = \frac{1}{\sqrt{(2\pi)^m}|\mathcal{S}|}\exp{\frac{1}{2}(X-M)\mathcal{S}^{-1}(X-M)}
\end{equation}
with mean $M\in \mathbb{R}^{m\times m}$ and $4th-$order
covariance tensor $\mathcal{S}\in \mathbb{R}^{m\times m\times m\times m}$ inheriting symmetries from three dimensions. To be more specific, $\mathcal{S}^{mn}_{ij} = \mathcal{S}^{nm}_{ij} = \mathcal{S}^{mn}_{ij}$ and $\mathcal{S}^{mn}_{ij} = \mathcal{S}^{ij}_{mn}$, thereby there is a vectorized version of Equation \ref{eq:grasspdf},
\begin{equation}
    P(X;\mu,\Sigma) = \frac{1}{\sqrt{(2\pi)^{\Tilde{m}}}|\Sigma|}\exp{\frac{1}{2}(X-\mu)\Sigma^{-1}(X-\mu)}
\end{equation}
where $\Tilde{m} = m + \frac{m(m-1)}{2}$, indicating the unfolded parameters:
\begin{figure*}
\begin{equation}
\label{eq:Smatrix}
\mu=
\begin{bmatrix}
\mu_{11}\\
\mu_{22}\\
\vdots\\
\mu_{mm}\\
\mu_{12}\\
\mu_{13}\\
\vdots\\
\mu_{(m-1)m}
\end{bmatrix}
\Sigma = 
  \begin{bmatrix}
  \mathcal{S}^{11}_{11}&\mathcal{S}^{22}_{11}&\ldots&\mathcal{S}^{mm}_{11}&\sqrt{2}\mathcal{S}^{12}_{11}&\sqrt{2}\mathcal{S}^{13}_{11}&\ldots&\sqrt{2}\mathcal{S}^{1m}_{11}\\
  \mathcal{S}^{11}_{22}&\mathcal{S}^{22}_{22}&\ldots&\mathcal{S}^{mm}_{22}&\sqrt{2}\mathcal{S}^{12}_{22}&\sqrt{2}\mathcal{S}^{13}_{22}&\ldots&\sqrt{2}\mathcal{S}^{1m}_{22}\\
  \vdots&\vdots& & \vdots & \vdots& & \vdots \\
  \mathcal{S}^{11}_{mm}&\mathcal{S}^{22}_{mm}&\ldots&\mathcal{S}^{mm}_{mm}&\sqrt{2}\mathcal{S}^{12}_{mm}&\sqrt{2}\mathcal{S}^{13}_{mm}&\ldots&\sqrt{2}\mathcal{S}^{1m}_{mm}\\
  
  \sqrt{2}\mathcal{S}^{12}_{11}&\sqrt{2}\mathcal{S}^{12}_{22}&\ldots&\sqrt{2}\mathcal{S}^{12}_{mm}&2\mathcal{S}^{12}_{12}&2\sqrt{2}\mathcal{S}^{13}_{12}&\ldots&2\mathcal{S}^{1m}_{12}\\
  \sqrt{2}\mathcal{S}^{13}_{11}&\mathcal{S}^{13}_{22}&\ldots&\sqrt{2}\mathcal{S}^{13}_{mm}&2\mathcal{S}^{13}_{13}&2\mathcal{S}^{13}_{13}&\ldots&2\mathcal{S}^{1m}_{13}\\
  \vdots&\vdots& & \vdots & \vdots& & \vdots \\
  \sqrt{2}\mathcal{S}^{1m}_{11}&\sqrt{2}\mathcal{S}_{22}^{2m}&\ldots&\sqrt{2}\mathcal{S}^{mm}_{1m}&2\mathcal{S}^{1m}_{12}&2\mathcal{S}^{13}_{1m}&\ldots&2\mathcal{S}^{mm}_{mm}\\
  \end{bmatrix}
\end{equation}
\end{figure*}

Since each entry of $X$ comes from rearranging DNN outputs, 
which can be assumed to not correlate with the other entries, 
there is $\mathcal{S}^{ab}_{cd}=1$ when $a=b=c=d$, 
otherwise $\mathcal{S}^{ab}_{cd}=0$,
then the likelihood could be further simplified with $\Sigma$ 
being an identity matrix. Then there is the proof of proposition \ref{prop:grass_MLE}.
\begin{proposition}
DEL with $J^{-1}_{\mathcal{G}}$  gives MLE of element on $\mathcal{G}(m.\mathbb{R}^n)$, if assuming $\Sigma$ is an identity matrix.
\end{proposition}
\begin{proof}
The log-likelihood function of PDF in Equation (\ref{eq:Smatrix}) is 
$\arg\max\limits_{Y\in SPD^{++}_m}L(Y;F_{\Theta_{NN}},\Sigma) = \arg\min\limits_{Y\in SPD^{++}_m}(\Tilde{F}_{\Theta_{NN}}-\Tilde{Y}))^{\top}(\Tilde{F}_{\Theta_{NN}}-\Tilde{Y}))=\arg\min\limits_{Y\in SPD^{++}_m}\|\Tilde{F}_{\Theta_{NN}}-\Tilde{Y}\|_F^2$, where $\Tilde{F}_{\Theta_{NN}}$ and $\Tilde{Y}$are  vectorized versions of $F_{\Theta_{NN}}$ and, $Y$ respectively. 
\end{proof}

\subsection{Generalization Ability}
\subsubsection{Proof of poposition \ref{prop:so_generalization}}
\begin{proposition}
\label{prop:so_generalization}
Any element of dimension $n$ on $SO(n)$ belongs to the image of $J^{-1}_{SO(n)}$ from known rotations within a certain range, 
if the Euclidean input of $J^{-1}_{SO(n)}$ is of more than $n$ dimensions.
\end{proposition}
\begin{proof}
The image of $J^{-1}_{SO(n)}$ is a set of $n$ orthogonal vectors which could span the $n$-dimensional vector space, or it could be regarded as parameterization of rotations around $n$ orthogonal axes, so they can be written as $\mathbf{\theta} = [\theta_1, \theta_2, \ldots, \theta_n]^{\top}\triangleq  \mathbf{u}\theta \triangleq \omega t \in \mathbb{R}^n $.

For the identity $\dot{R} = R[\omega]_{\times}\in T_RSO(n)$, with constant $\omega$, its solution $R(t) = R_0\exp([\omega]_{\times}t)\overset{R_0=I}{=}\exp([\omega]_{\times}t)=\exp{[\theta]_{\times}}=\sum\limits_k\frac{\theta^k}{k!}([u]_{\times})^k$, which means that if there are $n$ orthogonal basis, any rotation could be represented with transformations on $SO(n)$. 
\end{proof}

\newpage
\appendix
\onecolumn

\end{document}


%

%

\onecolumn
\title{Deep Extrinsic Manifold Representation}

\section{Proofs in Section \ref{sec_analysis}}
\subsection{Consistency of extrinsic loss and intrinsic loss}
\subsubsection{Proof of lemma \ref{lem:lipshitz}}
\begin{lemma}
Suppose that $\mathcal{M}_1, \mathcal{M}_2$ are smooth and compact Riemannian manifolds,$f:\mathcal{M}_1 \rightarrow \mathcal{M}_2$ is a diffeomorphism. Then $f$ is bilipschitz \wrt the Riemannian distance.
\end{lemma}
\begin{proof}
For $\mathcal{M}_1$ and $\mathcal{M}_2$ with smooth metrics $\rho_1$ and $\rho_2$ respectively, 
considering the smooth and continuous map  $\mathrm{d}f:U_1\rightarrow T\mathcal{M}_2$, where $U_1$ indicates the unit tangent bundle of $\mathcal{M}_1$  and $T\mathcal{M}_2$ represents the tangent space, then the function $\phi(u)=|df(u)|, u\in U_1$ is continuous too. Since $U_1$ is compact, there exists the maximum $C$. Denoting the length of path $r:[a,b]\rightarrow \mathcal{M}_1$ to be $L(\cdot)$, then there is $
L(f\circ r) = \int_a^b|(f\circ r)'|\mathrm{d}t\leq C\int_a^b|c'(t)|\mathrm{d}t=CL(r)$
thus $\rho_2(f(t_1), f(t_2))\leq C\rho_1(t_1, t_2), t_1, t_2\in \mathcal{M}_1$, then $f$ is $C-$ Lipshitz and $f^{-1}$ is $\frac{1}{C}-$ Lipshitz.
\end{proof}

\subsubsection{Proof of proposition
\ref{prop:conformance}}
%
The extrinsic metric could reflect the tendency of the intrinsic metric. 
%
Here we pay more attention to consistency when the losses are small because consistency is crucial in extrinsic loss convergence.
%
To be more specific, for a smooth embedding $J:\mathcal{M}\rightarrow\mathbb{R}^m$, where the n-manifold $\mathcal{M}$ is compact, and its metric is denoted by $\rho$, 
the metric of $\mathbb{R}^m$ is $d$. The minimal change of $d$ suggests the minimal change of $\rho$, which will be a direct result if $J$ is bilipshitz. Here the condition of compactness of the embedded space will be relaxed, where $J$ is only desired to be locally bilipshitz, while the proof follows the same idea of the lemma above.
\begin{proposition}
For a smooth embedding $J:\mathcal{M}\rightarrow\mathbb{R}^m$, where the n-manifold $\mathcal{M}$ is compact, and its metric is denoted by $\rho$, the metric of $\mathbb{R}^m$ is denoted by $d$. For any two sequences $\{x_k\}, \{y_k\}$ of points in $\mathcal{M}$ and their images $\{J(x_k)\}, \{J(y_k)\}$, if $\lim\limits_{n\rightarrow\infty}d(J(x_n), J(y_n))=0$ then $\lim\limits_{n\rightarrow\infty}\rho(x_n, y_n)=0$.
\end{proposition}
\begin{proof}
Given that every compact submanifold $Y_s = J(\mathcal{M}) \subset Y$ has positive normal injectivity radius, where $Y\subset \mathbb{R}^N$,  there exists a positive constant $r$,  such that 
$B_r v_{Y_s}=\{v\in v_{Y_s}:|v|\leq r\}$ with 
$v_{Y_s}$ to be the normal bundle of $Y_s$ in $Y$, 
and the normal exponential map $\exp_{Y_s}:v_{Y_s}\rightarrow Y$ is a diffeomorphism onto its image $S = \exp(B_r(v_{Y_s}))$. 
%
Because the exponential map $\exp_{Y_s}$  of $\mathcal{M}$ is a diffeomorphism,  $S$ is an open neighborhood of $Y_s$, and the inverse of $\exp_{Y_s}$,  $\log_{Y_s}$ is smooth.  
%
Let the retraction $g=p \circ \log_{Y_s} :S\rightarrow Y_s$ to be  $\log_{Y_s}\circ \exp_{Y_s}$ as the composition of the projection $p: Y_s\rightarrow Y$ and $\log_{Y_s}$,  the closure of $B_{r'}v_{Y_s}$ to be  $\Bar{B}_{r'}v_{Y_s}$ for $0<r'<r$, then its image under $\exp_{Y_s}$ would be a compact submanifold with boundary $Z\subset Y_s$. Hence there exists a real constant $C_1 \geq 0$ such that $g$ is $C_1$-Lipshitz on $Z$. 
 %
 From lemma 1 there exists a real constant $C_2\geq 0$ such that $J$ is $C_2$-lipshitz.
Moreover, since $g$ is the right inverse of the inclusion map $i: Y_s\rightarrow Y$, it follows that $i$ is $C_1$-bilipshitz, when restricted to the sufficiently small ball $B_{\epsilon}$, where $\epsilon < r'$.

Next, define the map $\phi(y_1, y_2) = \frac{d_{Y_s}(y_1, y_2)}{d_{Y}(y_1, y_2)}$, where $(y_1, y_2)\in Y_s\times Y_s$ and $y_1\neq y_2$, and $\phi$ is continuous since the Euclidean distance function $d_{Y_s}$ and $d_Y$ is continuous. For a compact subset $S_{comp}\subset Y_s\times Y_s$, there is a real value $C_3>0$ such that $\phi$ on $S_{comp}$ is bounded by $C_3$. Finally, J is locally $C_{max}$-bilipshitz, where $C_{max}=\max(C_1, C_2, C_3)$. Further, for any two sequences $\{x_k\}, \{y_k\}$ of points in $\mathcal{M}$ and their images $\{J(x_k)\}, \{J(y_k)\}$, if $\lim\limits_{n\rightarrow\infty}d(J(x_n), J(y_n))=0$, there is $\rho(x_n, y_n) \leq C_{max}d(J(x_n), J(y_n))$, $\lim\limits_{n\rightarrow\infty}\rho(x_n, y_n)=0$.
\end{proof}

\subsection{Asymptotic maximum likelihood estimation (MLE)}
\subsubsection{Proof of proposition \ref{prop:so3_MLE}}
\begin{proposition}
$J^{-1}_{SO}:\mathbb{R}^9\rightarrow SO(3)$  gives an approximation of MLE of rotations on $SO(3)$, if assuming $\Sigma = \sigma I$,  $I$ is the identity matrix and $\sigma$ an arbitrary real value.
\end{proposition}
\begin{proof}
For $x \sim \mathcal{N}_{\mathcal{G}}(\mu, \Sigma)$, there is $x=\mu\exp_{\mathcal{M}}([N]^{\vee}_{\mathcal{M}}) $, and the simplified log-likelihood function is
\begin{equation}
 L(Y;F_{\Theta_{NN}},\Sigma) = [\log_{\mathcal{M}}(F_{\Theta_{NN}}-Y)]_{\mathcal{M}}^{\vee\top}\Sigma^{-1}[\log_{\mathcal{M}}(F_{\Theta_{NN}}-Y)]_{\mathcal{M}}^{\vee}   
\end{equation}
with $\log_{\mathcal{M}}(F_{\Theta_{NN}}-Y)]_{\mathcal{M}}^{\vee}$ set to be $\epsilon$, if the pdf focused around the group identity, i.e. the fluctuation of $\epsilon$ is small, the noise $\epsilon$ 's distribution could 
be approximated by $\mathcal{N}_{\mathbb{R}^3}(\mathbf{0}_{3\times1},\Sigma)$ on $\mathbb{R}^3$. Then there is 
$\arg\max\limits_{Y\in SO(3)}L(Y;F_{\Theta_{NN}},\Sigma) \approx \arg\min\limits_{Y\in SO(3)}(F_{\Theta_{NN}}-Y))^{\top}(F_{\Theta_{NN}}-Y))=\arg\min\limits_{Y\in SO(3)}\|F_{\Theta_{NN}}-Y\|_F^2$. 
\end{proof}

\subsubsection{Proof of proposition \ref{prop:se3_MLE}}
\begin{proposition}
$J^{-1}_{SE}:\mathbb{R}^9\rightarrow SE(3)$ where the rotation part comes from $\mathtt{SVD}$,  gives an approximation of MLE of transformations on $SE(3)$, if assuming $\Sigma = \sigma I$,  $I$ is the identity matrix and $\sigma$ an arbitrary real value.
\end{proposition}
\begin{proof}
Thus, for $x \sim \mathcal{N}_{\mathcal{G}}(\mu, \Sigma)$, there is $x=\mu\exp_{\mathcal{M}}([N]^{\vee}_{\mathcal{M}}) $, and the simplified log-likelihood function is
\begin{equation}
 L(Y;F_{\Theta_{NN}},\Sigma) = [\log_{\mathcal{M}}(F_{\Theta_{NN}}-Y)]_{\mathcal{M}}^{\vee\top}\Sigma^{-1}[\log_{\mathcal{M}}(F_{\Theta_{NN}}-Y)]_{\mathcal{M}}^{\vee}   
\end{equation}
with $\log_{\mathcal{M}}(F_{\Theta_{NN}}-Y)]_{\mathcal{M}}^{\vee}$ set to be $\epsilon$, if the PDF focused around the group identity, i.e. the fluctuation of $\epsilon$ is small, the noise $\epsilon$ 's distribution could 
be approximated by $\mathcal{N}_{\mathbb{R}^3}(\mathbf{0}_{3\times1},\Sigma)$ on $\mathbb{R}^3$. Then there is 
$\arg\max\limits_{Y\in SO(3)}L(Y;F_{\Theta_{NN}},\Sigma) \approx \arg\min\limits_{Y\in SO(3)}(F_{\Theta_{NN}}-Y))^{\top}(F_{\Theta_{NN}}-Y))=\arg\min\limits_{Y\in SO(3)}\|F_{\Theta_{NN}}-Y\|_F^2$. 
\end{proof}

\subsubsection{Proof of proposition \ref{prop:grass_MLE}}
For $\mathcal{G}(m, \mathbb{R}^n)$, the probability density function of MACG is
\begin{align}
    P(x;R_{\mu}, \Sigma) &= \|\Sigma\|^{-\frac{r}{2}}\|(R_{\mu}x)^{\top}\Sigma^{-1}(R_{\mu}x)\|^{-\frac{p}{2}}\\ \notag
    & = \|\Sigma\|^{-\frac{r}{2}}\|(x)^{\top}\Sigma^{-1}(x)\|^{-\frac{p}{2}}
\end{align}
where $R_{\mu} \in O(n)$ and covariance matrix $\Sigma$ is a symmetric positive-definite.

As clarified in section \ref{sec:grassmann}, we obtain Grassmann matrix by equivalently computing on $SPD_{m}^{++}$ with the diffeomorphic mapping $J^{-1}_{\mathcal{G}}=\mathtt{UU^T}$. 
 The error model is modified to be $F_{\Theta_{NN}} = Y^{\top}Y + N$, where both $Y^{\top}Y$ and $N$ are semi-positive definite matrix (SPD) $SPD^{++}_m$. The Gaussian distribution extended on SPD manifold, for a random $2th-$order tensor $X\in \mathbb{R}^{m\times m}$ (matrix) is
\begin{equation}\label{eq:grasspdf}
    P(X;M,S) = \frac{1}{\sqrt{(2\pi)^m}|\mathcal{S}|}\exp{\frac{1}{2}(X-M)\mathcal{S}^{-1}(X-M)}
\end{equation}
with mean $M\in \mathbb{R}^{m\times m}$ and $4th-$order
covariance tensor $\mathcal{S}\in \mathbb{R}^{m\times m\times m\times m}$ inheriting symmetries from three dimensions. To be more specific, $\mathcal{S}^{mn}_{ij} = \mathcal{S}^{nm}_{ij} = \mathcal{S}^{mn}_{ij}$ and $\mathcal{S}^{mn}_{ij} = \mathcal{S}^{ij}_{mn}$, thereby there is a vectorized version of Equation \ref{eq:grasspdf},
\begin{equation}
    P(X;\mu,\Sigma) = \frac{1}{\sqrt{(2\pi)^{\Tilde{m}}}|\Sigma|}\exp{\frac{1}{2}(X-\mu)\Sigma^{-1}(X-\mu)}
\end{equation}
where $\Tilde{m} = m + \frac{m(m-1)}{2}$, indicating the unfolded parameters:
\begin{figure*}
\begin{equation}
\label{eq:Smatrix}
\mu=
\begin{bmatrix}
\mu_{11}\\
\mu_{22}\\
\vdots\\
\mu_{mm}\\
\mu_{12}\\
\mu_{13}\\
\vdots\\
\mu_{(m-1)m}
\end{bmatrix}
\Sigma = 
  \begin{bmatrix}
  \mathcal{S}^{11}_{11}&\mathcal{S}^{22}_{11}&\ldots&\mathcal{S}^{mm}_{11}&\sqrt{2}\mathcal{S}^{12}_{11}&\sqrt{2}\mathcal{S}^{13}_{11}&\ldots&\sqrt{2}\mathcal{S}^{1m}_{11}\\
  \mathcal{S}^{11}_{22}&\mathcal{S}^{22}_{22}&\ldots&\mathcal{S}^{mm}_{22}&\sqrt{2}\mathcal{S}^{12}_{22}&\sqrt{2}\mathcal{S}^{13}_{22}&\ldots&\sqrt{2}\mathcal{S}^{1m}_{22}\\
  \vdots&\vdots& & \vdots & \vdots& & \vdots \\
  \mathcal{S}^{11}_{mm}&\mathcal{S}^{22}_{mm}&\ldots&\mathcal{S}^{mm}_{mm}&\sqrt{2}\mathcal{S}^{12}_{mm}&\sqrt{2}\mathcal{S}^{13}_{mm}&\ldots&\sqrt{2}\mathcal{S}^{1m}_{mm}\\
  
  \sqrt{2}\mathcal{S}^{12}_{11}&\sqrt{2}\mathcal{S}^{12}_{22}&\ldots&\sqrt{2}\mathcal{S}^{12}_{mm}&2\mathcal{S}^{12}_{12}&2\sqrt{2}\mathcal{S}^{13}_{12}&\ldots&2\mathcal{S}^{1m}_{12}\\
  \sqrt{2}\mathcal{S}^{13}_{11}&\mathcal{S}^{13}_{22}&\ldots&\sqrt{2}\mathcal{S}^{13}_{mm}&2\mathcal{S}^{13}_{13}&2\mathcal{S}^{13}_{13}&\ldots&2\mathcal{S}^{1m}_{13}\\
  \vdots&\vdots& & \vdots & \vdots& & \vdots \\
  \sqrt{2}\mathcal{S}^{1m}_{11}&\sqrt{2}\mathcal{S}_{22}^{2m}&\ldots&\sqrt{2}\mathcal{S}^{mm}_{1m}&2\mathcal{S}^{1m}_{12}&2\mathcal{S}^{13}_{1m}&\ldots&2\mathcal{S}^{mm}_{mm}\\
  \end{bmatrix}
\end{equation}
\end{figure*}

Since each entry of $X$ comes from rearranging DNN outputs, 
which can be assumed to not correlate with the other entries, 
there is $\mathcal{S}^{ab}_{cd}=1$ when $a=b=c=d$, 
otherwise $\mathcal{S}^{ab}_{cd}=0$,
then the likelihood could be further simplified with $\Sigma$ 
being an identity matrix. Then there is the proof of proposition \ref{prop:grass_MLE}.
\begin{proposition}
DEL with $J^{-1}_{\mathcal{G}}$  gives MLE of element on $\mathcal{G}(m.\mathbb{R}^n)$, if assuming $\Sigma$ is an identity matrix.
\end{proposition}
\begin{proof}
The log-likelihood function of PDF in Equation (\ref{eq:Smatrix}) is 
$\argmax\limits_{Y\in SPD^{++}_m}L(Y;F_{\Theta_{NN}},\Sigma) = \arg\min\limits_{Y\in SPD^{++}_m}(\Tilde{F}_{\Theta_{NN}}-\Tilde{Y}))^{\top}(\Tilde{F}_{\Theta_{NN}}-\Tilde{Y}))=\arg\min\limits_{Y\in SPD^{++}_m}\|\Tilde{F}_{\Theta_{NN}}-\Tilde{Y}\|_F^2$, where $\Tilde{F}_{\Theta_{NN}}$ and $\Tilde{Y}$are  vectorized versions of $F_{\Theta_{NN}}$ and, $Y$ respectively. 
\end{proof}

\section{Generalization Ability}
\subsection{Proof of poposition \ref{prop:so_generalization}}
\begin{proposition}
\label{prop:so_generalization}
Any element of dimension $n$ on $SO(n)$ belongs to the image of $J^{-1}_{SO(n)}$ from known rotations within a certain range, 
if the Euclidean input of $J^{-1}_{SO(n)}$ is of more than $n$ dimensions.
\end{proposition}
\begin{proof}
The image of $J^{-1}_{SO(n)}$ is a set of $n$ orthogonal vectors which could span the $n$-dimensional vector space, or it could be regarded as parameterization of rotations around $n$ orthogonal axes, so they can be written as $\mathbf{\theta} = [\theta_1, \theta_2, \ldots, \theta_n]^{\top}\triangleq  \mathbf{u}\theta \triangleq \omega t \in \mathbb{R}^n $.

For the identity $\dot{R} = R[\omega]_{\times}\in T_RSO(n)$, with constant $\omega$, its solution $R(t) = R_0\exp([\omega]_{\times}t)\overset{R_0=I}{=}\exp([\omega]_{\times}t)=\exp{[\theta]_{\times}}=\sum\limits_k\frac{\theta^k}{k!}([u]_{\times})^k$, which means that if there are $n$ orthogonal basis, any rotation could be represented with transformations on $SO(n)$. 
\end{proof}

